\documentclass[sn-mathphys,Numbered]{sn-jnl}


\usepackage{graphicx}%
\usepackage{multirow}%
\usepackage{amsmath,amssymb,amsfonts}%
\usepackage{amsthm}%
\usepackage{mathrsfs}%
\usepackage[title]{appendix}%
\usepackage{xcolor}%
\usepackage{textcomp}%
\usepackage{manyfoot}%
\usepackage{booktabs}%
\usepackage{algorithm}%
\usepackage{algorithmicx}%
\usepackage{algpseudocode}%
\usepackage{listings}%

\usepackage{graphicx}
\usepackage{subcaption}
\usepackage{multirow}
\usepackage{hhline}
\usepackage{tabularx}




\newtheorem{theorem}{Theorem}
%

%
%

%

\newtheorem{lemma}{Lemma}

\newtheorem{assumption}{Assumption}
\usepackage{footnote}

\raggedbottom

\begin{document}

\title[Article Title]{Learning Ensembles of Interpretable Simple Structure\footnote{This manuscript is  under review at Annals of Operations Research, Special Issue on  Ensemble Learning for Operations Research and Business Analytics}}


\author*[1]{\fnm{Gaurav} \sur{Arwade}}\email{gbarwade@iastate.edu}

\author*[1]{\pfx{Dr.} \fnm{Sigurdur} \sur{Olafsson}}\email{olafsson@iastate.edu}

\affil*[1]{\orgdiv{Department of Industrial, Manufacturing and Systems, Engineering}, \orgname{Iowa State University}, \orgaddress{ \city{Ames}, \postcode{50010}, \state{Iowa}, \country{USA}}}

\abstract{
Decision-making in complex systems often relies on machine learning models, yet highly accurate models such as XGBoost and neural networks can obscure the reasoning behind their predictions. In operations research applications, understanding how a decision is made is often as crucial as the decision itself. Traditional interpretable models, such as decision trees and logistic regression, provide transparency but may struggle with datasets containing intricate feature interactions. 
However, complexity in decision-making stem from interactions that are only relevant within certain subsets of data. Within these subsets, feature interactions may be simplified, forming simple structures where simple interpretable models can perform effectively. 
We propose a bottom-up simple structure-identifying algorithm that partitions data into interpretable subgroups known as simple structure, where feature interactions are minimized, allowing simple models to be trained within each subgroup. We demonstrate the robustness of the algorithm on synthetic data and show that the decision boundaries derived from simple structures are more interpretable and aligned with the intuition of the domain than those learned from a global model. By improving both explainability and predictive accuracy, our approach provides a principled framework for decision support in applications where model transparency is essential.
}

\keywords{Simple structure, ensemble learning, model explainability, precision medicine, business analytics, decision science}



\maketitle

\section{Introduction}\label{sec1}

Predictive analytics play an important role in quantitative support for decision making, but when the models are complex and the intuition behind the predictions is not easily explainable, decision makers may be more reluctant to rely on predictions made by the model. Many of the most successful predictive models fall within the domain of ensemble learning, but those models are often not very interpretable or explainable. In this work, we propose a new interpretable ensemble learning algorithm, a characteristic that we view as important for the application of predictive analytics in many operations research and business analytics domains.

All machine learning models attempt to learn an underlying universal distribution of the data.  Is the data best described by single distribution? Not always. Further, not all data points may be necessary for successful decisions, and understanding critical data points is thus key. In learning theory, the focus is on characterizing critical samples that define an unknown model \citep{ghadikolaei_learning_2019}.

The complexity of the data may arise from instances in the dataset being drawn from more than one distribution \citep{bishop_pattern_2006}.  While complex models can approximate a wide range of patterns, they may still miss some of the underlying local patterns. Most importantly, highly non-linear models such as neural nets and boosted trees can approximate any complex universal mapping function, but they lack interpretability. Similarly, a simpler model with interpretability will miss many local patterns due to bias. 
An alternative is to identify simple structures approximating underlying different data distributions and consequently learn simpler machine learning models for each. 

Understanding and leveraging the concept of simple structures in data can offer a balance between model complexity and interpretability. As illustrated in Figure \ref{fig:SS_boundry} a simple structure refers to subsets of data with underlying patterns that are simple to model, often characterized by linear separability. For example, figure \ref{fig:SS_boundry} depicts data composed of two normal distributions, each forming a simple structure.
The first structure consists of two linearly separable classes, Class 1 and Class 2. The second structure contains three linearly separable classes: Class 1, Class 2, and Class 3. 
When treated as a single dataset, applying a Logistic Regression model results in underfitting, as shown in figure \ref{fig:LR_Boundry}, due to the model’s inability to capture the complexity of the overall data distribution. 
Conversely, a Neural Network model achieves perfect accuracy, but its decision boundaries are unnecessarily intricate and difficult to interpret, as illustrated in figure \ref{fig:NN_Boundry}.

\begin{figure}[H]
     \centering
     \begin{subfigure}[b]{0.3\textwidth}
         \centering
         \includegraphics[width=\textwidth]{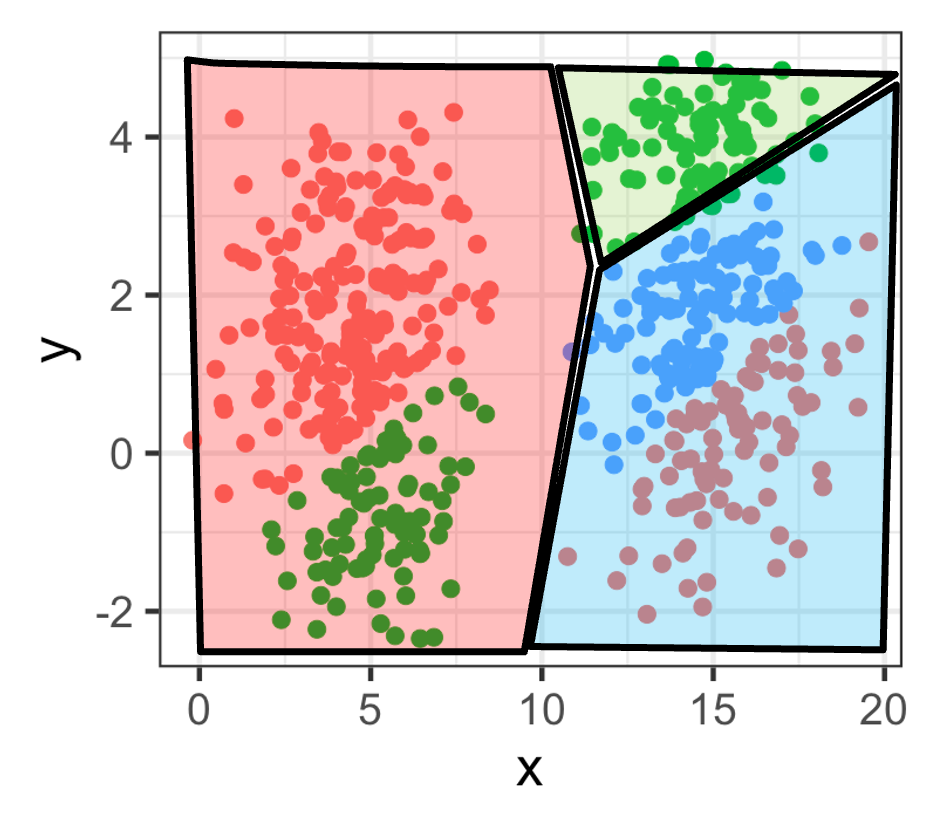}
         \caption{}
         \label{fig:LR_Boundry}
     \end{subfigure}
     \begin{subfigure}[b]{0.3\textwidth}
         \centering
         \includegraphics[width=\textwidth]{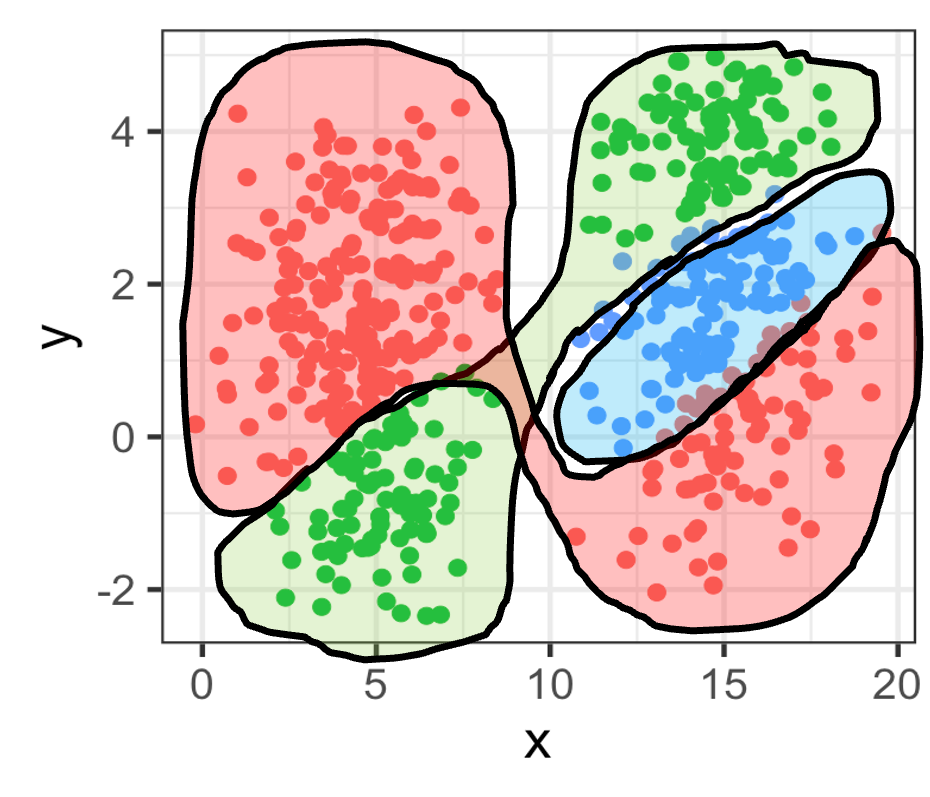}
         \caption{}
         \label{fig:NN_Boundry}
     \end{subfigure}
          \begin{subfigure}[b]{0.35\textwidth}
         \centering
         \includegraphics[width=\textwidth]{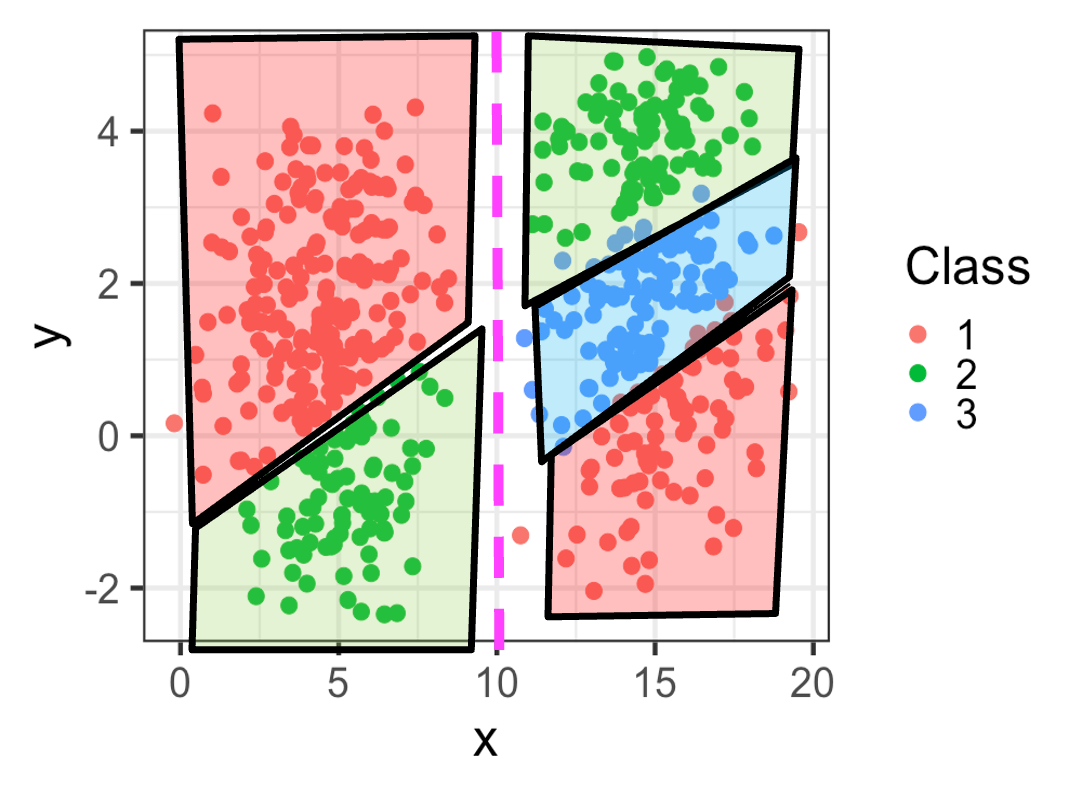}
         \caption{}
         \label{fig:SS_Boundry}
     \end{subfigure}
     \hfill
        \caption{Comparison of decision boundaries for data with underlying simple structures:
(a) Logistic Regression decision boundary underfits the data, failing to capture the underlying patterns
(b) Neural Network decision boundaries fit the data perfectly but are overly complex
(c) Ensemble of Logistic Regression models on the identified simple structures fits the data perfectly while maintaining simple and interpretable decision boundaries} \label{fig:SS_boundry}
\end{figure}

By acknowledging the presence of two simple structures within the dataset and fitting a separate Logistic Regression model for each structure, we can achieve the best of both worlds. As shown in Figure \ref{fig:SS_boundry}, the resulting decision boundaries are both simple and interpretable while achieving perfect accuracy, matching the performance of a Neural Network but with significantly reduced complexity. This highlights the value of identifying and utilizing simple structures in data for developing models that are both effective and interpretable.

Traditional state-of-the-art machine learning methods, including XGBoost, neural networks, support vector machines, and decision trees, are broadly classified as top-down methods. These methods construct decision boundaries by optimizing a global loss function, aiming to minimize errors across the entire dataset. While effective in achieving high accuracy, their global perspective often results in overly complex decision boundaries that obscure the natural structure of the data. This is particularly evident when the data contains subsets with simpler structures.

In contrast, our proposed approach adopts a bottom-up methodology, fundamentally differing from these top-down strategies. 
Our algorithm discovers localized subsets of instances where certain complex feature interactions cancel out, forming simple structures unique to those instances. By tailoring models to these naturally occurring subgroups, our method adapts to local patterns, capturing decision boundaries that emerge organically within each simple structure. This stands in contrast to top-down methods.

Even if a simple model on a simple structure does not significantly improve accuracy, if we know an instance to be classified belongs to one of the simple structures, this approach enables our algorithm to reduce complexity while preserving interpretability, achieving decision boundaries that are both meaningful and effective. Thus, besides selecting appropriate machine learning algorithms and their parameters, the training data optimization also plays a crucial role in the model's performance and interpretation.

This work introduces a simple structure-identifying algorithm to partition datasets into subsets with simpler underlying structures. By doing so, we enable the use of interpretable models, such as Logistic Regression and forming an ensemble to achieve comparable accuracy to complex models while maintaining transparency. The primary contributions of this paper are as follows:

\begin{itemize}
    \item  We define simple structures as data subsets with distinct patterns, such as linear separability, that can be modeled independently using simpler models. To ensure their identification, we outline key assumptions of the data with simple structures.
    \item We propose an algorithm that recursively identifies and partitions datasets into subsets with simpler structures. Recognizing the challenges posed by real-world data, we introduce heuristics to address violations of the simple structure definition and associated assumptions, making the algorithm robust to practical complexities.
    \item We demonstrate the algorithm’s effectiveness through comparative analysis with traditional approaches. Furthermore, we test the robustness of the simple structure-identifying algorithm against violations of the simple structure assumptions using synthetic data.
    \item We evaluate the algorithm's performance on open-source UCI and Kaggle datasets, showcasing how insights derived from models trained on simple structures differ from those obtained from models trained on the entire dataset.
\end{itemize}

In the remainder of this paper chapter 2 provides a relevant work, contrasting existing top-down methods with our proposed concept of simple structures. Chapter 3 formally introduces the concept of simple structures, defines the associated theoretical assumptions, and presents the proposed algorithm along with a proof of theoretical convergence. Chapter 4 addresses practical applications where theoretical assumptions of simple structures may be violated. 
Chapter 5 evaluates the algorithm’s performance on a range of real-world datasets, showcasing its ability to balance accuracy and interpretability effectively.

\section{Related Work}

We propose that data can be decomposed into multiple simple structures where complex feature interactions may cancel out, enabling models trained on these structures to make optimal decisions for instances within them. This perspective aligns with the broader concept of training data optimization or instance selection, which seeks to select or organize data to enhance model performance. 

Several instance-based learning methods, such as nearest neighbor rules \citep{hart_condensed_1968, gates_reduced_1972}, data density-based selection \citep{carbonera_density-based_2015}, and ranking-based approaches \citep{cavalcanti_ranking-based_2020, ghadikolaei_learning_2019}, aim to optimize training data by removing noise and redundancy \citep{olvera-lopez_review_2010}. Advanced techniques, like mixed-integer programming \citep{ghadikolaei_learning_2019} and local search algorithms \citep{lin_simultaneous_2021, neri_local_2020}, further compress datasets without sacrificing performance. As discussed, none of these methods explicitly try to model the underlying local simple structures but focus on improving accuracy and training time.

In machine learning, several methodologies incorporate the idea of approximating data using simple models, either directly or indirectly. Boosting, autoencoders, and collaborative filtering all use the idea of approximating the data using simple models or similar instances directly or indirectly.

Boosting attempts to exploit different structures in the data by learning a series of weak learners. 
The underlying idea is to combine weak learners to make a decision with outstanding performance~\citep{meir_introduction_2003},~\citep{freund_decision-theoretic_1997},~\citep{friedman_greedy_2001},~\citep{chen_xgboost_2016}.
Boosting attempts to fit close to the data, so the learned weak structures are dependent on the residuals from previous weak learners; thus, patterns may not be intuitive and lack model explainability~\citep{caruana_intelligible_2015}.  

The autoencoder compresses input into a meaningful latent representation and disentangles variations that are features~\citep{bank_autoencoders_2020},~\citep{bengio_representation_2013} akin to identifying simple structures in feature space as latent representations combine similar instances~\citep{chung_learning_2017},~\citep{tschannen_recent_2018}. However, nonlinearities can alter original data structures, forming new, complex ones even for simpler linear relations.

In recommendation systems and fraud detection, data comprises distinct cohorts as people have different characteristics, but individuals with similar traits form a simple structure. 
Collaborative filtering uses a user-item matrix and matches users with similar interests and preferences by calculating similarity~\citep{su_survey_2009}, deep learning-based recommendation systems learn complex non-linear relations between users and items and make recommendations~\citep{he_neural_2017},~\citep{elkahky_multi-view_2015},~\citep{zhang_deep_2019}.
However, existing methods focus on finding the most similar instances to a query versus which instances form simple structures. 

Our approach of approximating the underlying data distribution through an ensemble of simple structures shares conceptual parallels with Gaussian Mixture Models (GMMs) and clustering. GMMs model data as a combination of Gaussian distributions, probabilistically assigning data points to clusters using the Expectation-Maximization (EM) algorithm \citep{bishop_pattern_2006}. However, GMMs aim to fit the entire dataset to a predefined number of distributions, often missing meaningful local structures and performing poorly in real-world applications compared to advanced models like XGBoost and Neural Networks \citep{wan_novel_nodate}.

In contrast, our method assumes datasets comprise multiple, potentially overlapping distributions and focuses on identifying simple structures—subsets with locally significant, linearly separable classes. Unlike GMMs, which provide a global clustering solution, our approach uncovers interpretable local patterns that highlight distinct subgroups, offering deeper insights into data heterogeneity.

Unlike traditional top-down methods that focus on minimizing global loss or boosting approaches that iteratively fit closer to the data without identifying inherent structures, our bottom-up method explicitly searches for simple structures. Instance-based learning optimizes data to reduce global loss, recommendation systems identify similar instances, and GMMs cluster data in feature space; however, none of these approaches explicitly address the separability of classes in local regions, which is central to the concept of simple structures. By focusing on class separability within localized regions, our method offers a distinct and interpretable approach to modeling data. This is particularly relevant in real-world scenarios, such as medical data, where identifying distinct distributions can support precision healthcare by tailoring decisions to specific populations \citep{colijn_toward_2017}.

\section{Methods}\label{sec4}

In this section, we introduce the simple structure identifying algorithm in detail. First, we present the assumptions for the data with simple structures. The main theorem of the paper shows that given assumption 1 and assumption 2, the algorithm will identify all the simple structures present in the dataset without any loss. To scale the algorithm to practical datasets with deviations from the assumptions, we describe the proposed heuristics.

\subsection{Background}

A simple structure refers to a subset of instances within a dataset that exhibits straightforward, localized patterns, despite the overall dataset being complex due to intricate feature-target interactions. These structures are characterized by regions where the complexity of interactions is significantly reduced, making the patterns easier to model and interpret.

Within each simple structure, the data can be further divided into linearly separable subsets, where each subset corresponds to a unique class. In this hierarchy, the simple structure represents a subset of the entire dataset, while the linearly separable subsets represent divisions within the simple structure based on class separability. 
\textbf{ In short, theoretically, a dataset consists of multiple simple structures, each simple structure made up of linearly separable subsets of data instances, where each subset within a simple structure corresponds to a unique class.}

To illustrate, in Figure \ref{fig:SS_Boundry}, the dataset contains two simple structures. In the region of Simple Structure 1, there are two linearly separable subsets corresponding to Class 1 and Class 2. Similarly, in the region of Simple Structure 2, there are three linearly separable subsets corresponding to Class 2, Class 3, and Class 1. This demonstrates how each simple structure contains its own linearly separable subsets, emphasizing the localized simplicity within these regions.

First, we will define few variables and custom functions then state the assumptions made in the simple structure dataset which will be leveraged to identify them -  \newline

In this paper, we denote the dataset as \( D \), which consists of \( n \) instances, represented as:
\[
D = \{x_1, x_2, \dots, x_n\}.
\]

We hypothesize that \( D \) can be decomposed into \( h \) disjoint \textit{simple structures}, denoted by \( S_1, S_2, \dots, S_h \) such that:
\[
D = \bigcup_{j=1}^{h} S_j, \quad \text{and} \quad S_i \cap S_j = \emptyset \; \text{for} \; i \neq j,
\]

where \( S_j \) represents the \( j \)-th simple structure. 
Each simple structure \( S_j \) is a subset of \( D \) containing instances that exhibit localized, simplified patterns, which can be modeled independently.  
The condition \( S_i \cap S_j = \emptyset \) ensures that no instance belongs to more than one simple structure. 

Each simple structure \( S_j \) consists of \( i \) subsets of instances, where each subset corresponds to a unique class. This can be expressed as:

\[
S_j = \bigcup_{i=1}^{m_j} S_j^i,
\]

where \( S_j^i \) represents the \( i \)-th subset within the \( j \)-th simple structure, and \( m_j \) is the total number of unique classes within \( S_j \). Each subset \( S_j^i \) is defined by instances belonging exclusively to a single class.

A key property of these subsets is \textit{linear separability}. Within a given simple structure \( S_j \), it is always possible to separate any subset \( S_j^i \) from another subset \( S_j^k \) (\( k \neq i \)) using a linear decision boundary. This ensures that the interactions between features are simplified within each simple structure, facilitating the use of interpretable models like linear classifiers for decision-making.

For instance, if \( S_j \) represents a simple structure, and \( S_j^1 \) and \( S_j^2 \) are its subsets, then there exists a hyperplane that can separate instances in \( S_j^1 \) from those in \( S_j^2 \). This property of linear separability highlights the localized simplicity of patterns within a simple structure, even if the overall dataset \( D \) exhibits complex feature-target interactions.

Let \( N(.) \) denote the function that provides the neighborhood of an instance. Further, let \( c \) represent the class of an instance. The set of instances belonging to class \( c \) within the neighborhood of an instance in the dataset \( D \) is defined as \( B_{c}^{D}(.) \).

\[
B_{c}^{D}(x) = \{x' \in N(x) \;|\; \text{class}(x') = c\},
\]

where \( x \) is the reference instance in \( D \), and \( N(x) \) represents the neighborhood of \( x \).

\subsection{Assumptions for Simple Structure}\label{subsec1}

By simple structures, we mean that the following two assumptions are satisfied for each simple structure \( S_j \). Specifically, we assume that all simple structures are separated from each other. This is formally defined as follows:

\begin{assumption}[Assumption of Separation]
Let \( x \in S_j \), where \( j = 1, 2, \dots, h \). Then the following conditions hold:
\[
\forall x \in S_j, \; N(x) \cap S_k = \emptyset \quad \text{and} \quad \forall x \in S_j, \; x \notin N(x') \; \text{where} \; x' \in S_k, \; k \neq j.
\]
\end{assumption}

The first condition ensures that the neighbors of any instance \( x \) in \( S_j \) do not belong to a different simple structure \( S_k \). The second condition ensures that no instance \( x \) in \( S_j \) is a neighbor of an instance \( x' \) in \( S_k \), where \( k \neq j \).

The second assumption assumes the absence of isolated subsets within a simple structure, stipulating that each instance must have a neighbor in one of the adjacent subsets within the simple structure. This assumption also assures that no subset of a given simple structure remains undetected. 

As stated previously, \( S_j^i \) represents the \( i \)-th subset within the \( j \)-th simple structure. We now define a \textit{transition instance} \( x_t \) from \( S_j^{i-1} \) as an instance such that all or some part of the neighborhood of \( x_t \), denoted by \( N(x_t) \), lies in \( S_j^i \). Formally:

\[
x_t \in S_j^{i-1}, \quad \text{and} \quad N(x_t) \cap S_j^i \neq \emptyset.
\]

Using this definition, we call \( S_j^{i-1} \) the \textit{predecessor} of \( S_j^i \), as there exists a transition instance \( x_t \) linking them. Furthermore, if \( S_j^i \) also contains a transition instance \( x_t' \) such that its neighborhood overlaps with \( S_j^{i-1} \), i.e.,

\[
x_t' \in S_j^i, \quad \text{and} \quad N(x_t') \cap S_j^{i-1} \neq \emptyset,
\]

then \( S_j^{i-1} \) and \( S_j^i \) can be considered \textit{predecessors of each other}.

\begin{assumption}[Assumption of Transition]
The following three conditions must be met to ensure the algorithm can always progress:

(a) If a simple structure \( S_j \) is composed of \( m \) linearly separable subsets \( S_j^i \), where each subset corresponds to a unique class, and \( S_j^i \) transitions from its predecessor \( S_j^{i-1} \), then there exist at least \( m-1 \) transition instances \( x^t \) connecting \( m-1 \) unique pairs of \( S_j^i \) and \( S_j^{i-1} \). Formally:
\[
\forall x^t \in S_j^{i-1}, \; \exists x' \in N(x^t) \; \text{such that} \; x' \in S_j^i,
\]
where \( N(x^t) \) is the neighborhood of \( x^t \).

(b) An isolated subset does not exist within the simple structure. That is, for any subset \( S_j^i \subset S_j \),
\[
\nexists S_j^i \; \text{such that} \; \forall x \in S_j^i, \; N(x) \subseteq S_j^i \; \text{and} \; \forall x \in N(x'), \; x' \in S_j^i.
\]
This ensures that each subset \( S_j^i \) is connected to other subsets in \( S_j \).

(c) There are no outliers in a subset of a simple structure. Formally:
\[
\nexists x \in S_j^i \; \text{such that} \; \nexists x' \in S_j^i \setminus \{x\} \; \text{where} \; x \in N(x').
\]
This guarantees that every instance in \( S_j^i \) has a connection to at least one other instance within the same subset.
\end{assumption}

\subsection{Vanilla Simple Structure Identifying Algorithm}\label{subsec2}

We approximate each of the  $h$  simple structures in the dataset using subsets of instances. To identify these simple structures, we leverage both Assumption 1 (Separation) and Assumption 2 (Transition). According to Assumption 2, instances or subsets of instances within a simple structure are connected through their neighborhoods, while Assumption 1 ensures there are no connections between distinct simple structures.

To operationalize these assumptions, we employ the  K -Nearest Neighbors (KNN) space computed using Euclidean distance to identify simple local structures. KNN serves as a proximity measure, capturing local relationships within the data. The nearest neighbors of an instance are expected to share similar features responsible for their class, thereby forming a simple structure.

Since the size and characterization of instances in a simple structure are unknown, the algorithm begins with a random seed instance and recursively expands to include its nearest neighbors. This recursive expansion ensures that all connected instances belonging to the same simple structure are included. The value of  K  in the KNN space is chosen to satisfy both Assumption 1 and Assumption 2. Thus now \( N(.) \) denotes the function that provides the K nearest neighbors of an instance and  \( B_{c}^{D}(.) \)  represents the set of instances belonging to class \( c \) within the K nearest neighbors of an instance in the dataset \( D \).

Suppose we start with data point $x\in S_j$. We want to generate a set $R_j(x)$ using seed $x$ such that $R_j(x) = S_j$ for some K satisfying Assumption 1 and Assumption 2.  The first step is  
{\setlength{\abovedisplayskip}{1pt}
\setlength{\belowdisplayskip}{1pt}
\begin{equation} \label{step1}
    R_j^{1}(x)= x \cup N(x)
\end{equation}}
After $k$ steps we have a subset $R_j^{k}(x)$ and we have newly added points 
{\setlength{\abovedisplayskip}{1pt}
\setlength{\belowdisplayskip}{1pt}
\begin{equation} \label{step2}
    A(k)=R_j^{k}(x)\setminus R_j^{k-1}(x)
\end{equation}}
The simple structure identifying algorithm will only grow on the data points in $A(k)$; that is, in the next iteration, we will add the data points.
{\setlength{\abovedisplayskip}{1pt}
\setlength{\belowdisplayskip}{1pt}
\begin{equation} \label{step3}
    \tilde A(k+1) = \bigcup_{x\in A(k)} N(x)
\end{equation}}
However, some of those data points may already be in $R_j^{k}(x)$. Thus,
{\setlength{\abovedisplayskip}{1pt}
\setlength{\belowdisplayskip}{1pt}
\begin{equation} \label{step4}
    A(k+1) = \tilde A(k+1) \setminus R_j^{k}(x)
\end{equation}}

The growth for the simple structure $S_j$ will stop when at a step $k$ the $A(k) = \emptyset$. 
 This recursion growth will include all the instances and all the $S_{j}^{k}$ subsets in an identified simple structure as long as Assumption 1 and Assumption 2 are satisfied.  Recursive growth will stop automatically as the data satisfies Assumption 1 blue{with selected K}.
 Simple structure search will stop automatically when we have identified all the simple structures or every instance is assigned to some simple structure. 
The main theorem of the paper shows that given assumptions 1-2, the above steps will identify all the simple structures present in the dataset without any loss. 
For simplicity, we constrain ourselves to a data set $D$ with two simple structures $S_1$ and $S_2$ such that $D=S_1 \cup S_2$ and $S_1 \cap S_2 = \emptyset$.

\begin{theorem}
Let $S_1$ be a simple structure composed of two linearly separable subsets $S_{1}^1$ and $S_{1}^2$. Let $R_{1}(x)$ denote the subset constructed from a starting point $x\in S_1$ using the simple structure identifying algorithm. If Assumption 1 and Assumption 2 hold, then vanilla simple structure identifying algorithm would yield
$R_{1}(x)=S_1$
In other words, the vanilla simple structure identifying algorithm will discover the simple structure corresponding to its starting point.
\end{theorem}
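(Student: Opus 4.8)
The plan is to prove the set equality $R_{1}(x)=S_1$ by establishing the two inclusions $R_{1}(x)\subseteq S_1$ (soundness: the recursion never escapes the structure it starts in) and $S_1\subseteq R_{1}(x)$ (completeness: the recursion reaches every instance of that structure). Before either inclusion I would record two structural facts about the recursion itself. First, \emph{termination}: the sequence $R_1^{1}(x)\subseteq R_1^{2}(x)\subseteq\cdots$ is nondecreasing and contained in the finite set $D$, so it stabilizes and the stopping condition $A(k)=\emptyset$ is reached at some finite step $K$, with $R_{1}(x)=R_1^{K}(x)$. Second, a \emph{closure property}: because the recursion expands every newly added instance exactly once through equations \eqref{step2}--\eqref{step4}, any $y$ that ever enters $R_{1}(x)$ has had its entire neighborhood $N(y)$ appended at the following step, so the terminal set satisfies $N(y)\subseteq R_{1}(x)$ for every $y\in R_{1}(x)$. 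This closure property is the workhorse for both inclusions.

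For \emph{soundness} I would argue by induction on the step index $k$ that $R_1^{k}(x)\subseteq S_1$, invoking Assumption 1. The base case $R_1^{1}(x)=x\cup N(x)$ holds because $x\in S_1$ and the separation condition $N(x)\cap S_2=\emptyset$, together with $N(x)\subseteq D=S_1\cup S_2$, force $N(x)\subseteq S_1$. For the inductive step, every instance in $A(k)$ lies in $S_1$ by hypothesis, and applying the same separation condition to each such instance gives $\tilde A(k+1)=\bigcup_{y\in A(k)}N(y)\subseteq S_1$; hence $A(k+1)\subseteq S_1$ and $R_1^{k+1}(x)\subseteq S_1$. Passing to the terminal step yields $R_{1}(x)\subseteq S_1$.

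For \emph{completeness} I would argue by contradiction. Write $R=R_{1}(x)$ and suppose $Z=S_1\setminus R\neq\emptyset$. The strategy is to show first that each of the two subsets lies entirely inside $R$ or entirely inside $Z$, and then that the latter option is impossible. For the first claim, suppose a subset $S_1^{i}$ met both $R$ and $Z$; within-subset connectivity of the KNN graph would supply an edge from some $y\in R\cap S_1^{i}$ to some $z\in Z\cap S_1^{i}$, i.e. $z\in N(y)$, whereupon the closure property $N(y)\subseteq R$ forces $z\in R$, contradicting $z\in Z$. Since the seed's subset, say $S_1^{1}$, already meets $R$, it is therefore contained in $R$. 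For the second claim, if the remaining subset $S_1^{2}$ were contained in $Z$, the transition condition 2(a) supplies a transition instance $x^{t}$ whose neighborhood meets both subsets; the mutual-predecessor structure guarantees such a crossing in the required direction, so an $x^{t}\in R$ and closure force its overlapping $S_1^{2}$-neighbor into $R$, contradicting $S_1^{2}\subseteq Z$. Hence $Z=\emptyset$ and $S_1\subseteq R$. Combining the two inclusions gives $R_{1}(x)=S_1$.

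The step I expect to be the main obstacle is the within-subset connectivity used in the first claim — promoting the purely local, per-instance guarantees of Assumption 2 into genuine connectivity of each subset in the directed KNN graph. Conditions 2(b) and 2(c) assert only that no subset and no instance is isolated, i.e. that reverse-degrees are positive; by themselves they do not rule out a subset fracturing into two mutually unreachable KNN-clusters, each of which satisfies 2(c) internally, which would stall the recursion before the subset is exhausted. To close this gap I would either strengthen the reading of 2(c) to full within-subset KNN-connectivity (the interpretation that the geometry of a single linearly separable, non-fragmented class is meant to supply), or show that any such fracture would place a transition-free internal boundary inside the subset, contradicting the premise that $K$ was chosen to satisfy both assumptions simultaneously. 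The cross-subset crossing in the second claim, by contrast, is essentially immediate once the closure property and the transition instance are in hand.
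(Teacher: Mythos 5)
Your proposal follows the same skeleton as the paper's proof: the equality is split into the two inclusions, with $R_{1}(x)\subseteq S_1$ derived from Assumption 1 (the paper uses a minimal-first-violation contradiction where you use induction on the step index --- these are equivalent) and $S_1\subseteq R_{1}(x)$ derived by contradiction from Assumptions 2(a)--(c), with the seed's subset covered first and the transition instance used to cross into the other subset. What you add beyond the paper is genuinely useful: the explicit termination argument and, especially, the closure property $N(y)\subseteq R_{1}(x)$ for every $y\in R_{1}(x)$, which the paper's proof uses tacitly at every step but never states. Most importantly, the obstacle you flag at the end is real, and it is a gap in the paper's own proof as much as in yours. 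The paper's case (b) excludes only a point $x'$ with no in-neighbor inside $S_1^2$ (the literal content of Assumption 2(c)); its two-case split, ``no transition instance exists'' versus ``$x'$ is isolated,'' is not exhaustive, because it omits precisely the scenario you describe: $S_1^2$ fracturing into two internally in-reachable KNN components with the transition instance reaching only one of them. Likewise the paper's assertion that ``by Assumption 2b, $S_1^1 \in R_{1}(x)$'' silently promotes the no-isolated-subset condition into full within-subset connectivity of the KNN graph --- exactly the strengthening of 2(c) you propose as a remedy. One further point you brush past slightly (``the mutual-predecessor structure guarantees such a crossing in the required direction'') is also unresolved in the paper: KNN neighborhoods are directed, Assumption 2(a) only guarantees transitions from the predecessor $S_j^{i-1}$ into $S_j^i$, and the paper sidesteps this by fixing the seed in $S_1^1$ even though the theorem permits any $x\in S_1$; a fully rigorous proof needs either symmetry of crossings or a case for a seed in $S_1^2$. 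In sum, your argument is correct modulo the same connectivity assumption the paper itself implicitly makes, and your proposed fix --- reading 2(c) as within-subset KNN-connectivity, or adding it as an explicit hypothesis --- is the right one; with it, both your proof and the paper's go through.
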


\begin{proof}
(i) We start by showing that $R_{1}(x)\subseteq S_1$ by contradiction. Assume that it is not true. Then there exist an $x' \in R_{1}(x)$ such that $x'\notin S_1$. Let $i'$ denote the first time that such a point is constructed, namely
    \[
    i' = \min\{i:A(i)\cap S_2 \neq \emptyset\}
    \]
    \indent Let $x'\in A(i')\cap S_2$. Then $x'\in N(x'')$ for some $x''\in R_1^{i'-1}$ and $x''\in S_1$ by the definition of $i'$, which contradicts Assumption 1.  This shows that $R_{1}(x)\subseteq S_1 \label{r_in_s}$.
    
    (ii) Now we show that $S_1 \subseteq R_{1}(x)$, again by contradiction. Assume that it is not true. Then assume there exists an $x' \in S_1$ such that $x' \notin R_{1}(x)$. 
    
    Let $x$ denote the seed of the algorithm and assume $x \in S_{1}^1$. First, consider the case where $x' \in S_{1}^2$. As $x$ is the seed we can state that $x\in R_{1}(x)$, and by Assumption 2b, $S_{1}^1 \in  R_{1}(x)$. Now consider the case of $x' \notin R_{1}(x)$ in two parts. 
    
    a) First, we know $x' \in S_{1}^2$ and assume $\exists  x'' \in S_{1}^2$. We will only have
    $x' \notin R_{1}(x)$ iff $\nexists  x^t \in S_{1}^{1} : x'' \in N(x^t)$. 
    In other words $x' \in S_{1}^2$ but there is no transition instance $x^{t} \in S_{1}^1$. Without transition instance it would be impossible to grow from $S_{1}^{1}$ and $S_{1}^{2}$. This contradicts Assumption 2a and there must be at least one $x^t$ between $S_{1}^{1}$ and $S_{1}^{2}$. Hence it is not possible that $\nexists  x^t \in S_{1}^{1} : x'' \in N(x^t)$. 
    
    b) Second, we know $x' \in S_{1}^2$, and $\nexists x'' \in S_{1}^2 \setminus \{x'\} : x' \in N(x'')$. In other words,  $x' \in S_{1}^2$, but it cannot be reached by any of the nearest neighbors from $S_{1}^{2}$ as it forms an isolated cluster within $S_{1}^2$. But this contradicts Assumption 2c, so this case is also false. 
    
    As both cases lead to a contradiction, the Assumption that there exists an $x' \in S_1$ such that $x' \notin R_{1}(x)$ is false, and we must have $S_1 \subseteq R_{j}(x)$. Thus, $R_{1}(x) = S_{1}$ and the theorem is proven.
     
\end{proof}

The above theorem fails if $R_{1}(x)\neq S_1$ using the algorithm. This happens either by violating Assumption 1 and growing into the adjacent simple structure $S_2$ or by excluding a part of $S_1$ by violating Assumption 2. 

\begin{algorithm}[H]
    \caption{Vanilla Simple Structure Identifying Algorithm}\label{alg:vanilla_algo}
    \begin{algorithmic}
    \Require KNN K, data set D
    \State initialization $S = \{\}$ \Comment{List of identified simple structure}
    \While {$D \neq \emptyset$}
     \State initialization $R_j = \emptyset$, 
     \State initialization $Ak = \emptyset$ \Comment{Set for neighbours to be explored}
     \State Sample  $x \sim D$ 
     \State $R_j \gets x$
     \State $Ak \gets N(x)$
     \While {$Ak \neq \emptyset$}
     \State Sample $i\sim AK$
     \State $R_j \gets R_j \cup i$
     \State $Ak \gets Ak \cup (N(i)\setminus R_j)$
     \State $AK \gets AK \setminus i$
     \EndWhile
     \State $D \gets D \setminus R_j$
     \State $S \gets S \cup \{R_j \}$ \Comment{Append identified simple structure to a list of simple structure}
     \EndWhile
     \State return $S$
    \end{algorithmic}
\end{algorithm}

\subsection{Algorithm with Growth Rule}\label{subsec3}

In real life, satisfying Assumption 1 in real datasets is hard.
Assumptions 1 and 2 are restrictive but are only needed to guarantee that the algorithm always discovers the exact desired structure. It may still obtain a good approximation despite deviations from one or both assumptions.
For assumptions 2b and 2c if such isolated subsets or outliers exist, then ignoring those is an acceptable outcome. When Assumption 1 (Assumption of separation) is violated, then the $R_j(x) \cap S_k \neq \emptyset$, that is, distributions that comprise the data or simple structures overlap, and we need to remedy this.
In the edited nearest neighbor algorithm, the algorithm assumes different classes come from different distributions, and repeated removal of misclassified KNN instances makes such distribution disjoint~\citep{noauthor_experiment_1976}. 
Thus, we use a heuristic to stop the growth of the $R_j(x)$ into $S_k$ and modify the vanilla algorithm by adding the rule of growing simple structure only on the correctly classified instances by KNN algorithm for a $K$.

Similar to vanilla simple structure identifying algorithm after $k$ steps, we have a subset $R_j^{k}(x)$ and we have newly added points  according to step (\ref{step2}) 
$A(k)=R_j^{k}(x)\setminus R_j^{k-1}(x)$.
But now these can be divided into two disjoint subsets $A(k)=A_1(k)\cup A_2(k)$, where the points in $A_1(k)$ are correctly classified by  KNN using $D$, and $A_2(k)$ are incorrectly classified. The simple structure identifying algorithm will only grow on the data points in $A_1(k)$; that is, we will modify step (\ref{step3}) as
$\tilde A(k+1) = \bigcup_{x\in A_1(k)} N(x)$. Although some of those data points may already be in $R_i^{k}(x)$.

The following lemma and theorem show why the above heuristic might help to handle the violation of Assumption 1. Lemma 1 shows when complete failure of the heuristic and algorithm is possible.

\begin{lemma}
If Assumption 1 (Separation) is violated, the algorithm will expand into a distinct simple structure \( S_2 \) if there exists at least one correctly classified instance \( x' \in S_1 \) such that:
\[
N(x') \cap S_2 \neq \emptyset \quad \text{and} \quad |B_l^D(x')| \geq \left\lceil \frac{k}{2} \right\rceil.
\]
Under these conditions, the region grown from \( x \), denoted as \( R_1(x) \), will include instances from both \( S_1 \) and \( S_2 \), resulting in:
\[
R_1(x) \subseteq S_1 \cup S_2.
\]
\end{lemma}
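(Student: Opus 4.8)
The plan is to establish the leakage directly by showing that the growth rule admits the hypothesized instance $x'$ as a growth point and that expanding from it necessarily pulls in a point of $S_2$. The first step is to reinterpret the two hypotheses. The condition $N(x') \cap S_2 \neq \emptyset$ is exactly the localized violation of Assumption 1 at $x'$: a point of $S_1$ has at least one $K$-nearest neighbor lying in $S_2$. The condition $|B_l^D(x')| \ge \lceil k/2 \rceil$, where $l$ is the label of $x'$, says that at least half of the neighbors of $x'$ carry its own class; since any competing class can then receive at most $\lfloor k/2 \rfloor \le \lceil k/2 \rceil$ votes, class $l$ is a (weak) plurality winner and the KNN rule on $D$ classifies $x'$ correctly. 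Hence $x'$ is a \emph{correctly classified} instance, which under the modified recursion means that whenever $x'$ is a newly added frontier point it falls into the admissible subset $A_1(k)$ rather than the blocked subset $A_2(k)$.

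Next I would argue that $x'$ is in fact reached, i.e. $x' \in R_1(x)$. Here I would lean on the discovery mechanism already used for the vanilla algorithm in Theorem 1: starting from the seed $x \in S_1$, the recursion propagates through $S_1$ along the transition instances guaranteed by Assumption 2, and since $x' \in S_1$ it is eventually added and enters some frontier set $A(k)$. Combined with the previous step, $x'$ lands in $A_1(k)$ and is therefore used to expand the frontier rather than being discarded.

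The final step is the leak itself. By the modified growth rule the frontier advances as $\tilde A(k+1) = \bigcup_{y \in A_1(k)} N(y)$, so in particular $N(x') \subseteq \tilde A(k+1)$. Because $N(x') \cap S_2 \neq \emptyset$, there is a point $z \in S_2$ that is either already in $R_1^{k}(x)$ or is added at this step; in either case $z \in R_1(x)$. Together with the seed $x \in S_1 \cap R_1(x)$, this shows $R_1(x)$ contains instances of both $S_1$ and $S_2$, and since $D = S_1 \cup S_2$ forces $R_1(x) \subseteq S_1 \cup S_2$ trivially, the stated conclusion follows.

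I expect the main obstacle to be the reachability claim $x' \in R_1(x)$ in the second step. Because the growth rule advances only along correctly classified instances, the path from the seed $x$ to $x'$ could in principle be severed by an intermediate misclassified instance, and Assumption 2 alone guarantees connectivity of $S_1$ but not that every instance on a connecting path is correctly classified. I would address this either by restricting the lemma to the case where $x'$ lies on a correctly classified chain reachable from the seed (consistent with Theorem 1 discovering $S_1$), or by observing that it suffices for $x'$ to be adjacent to a single already-admitted correctly classified instance of $S_1$, which the local connectivity supplies; making this dependence explicit is the delicate part of the argument.
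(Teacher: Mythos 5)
Your proposal is correct in substance and follows essentially the same mechanism as the paper's proof: reach $x'$, note that its correct classification ($|B_l^D(x')|\geq\lceil k/2\rceil$ guarantees a majority vote for its own class) places it in the admissible frontier $A_1$, and then observe that growing from $x'$ pulls the neighbor $z \in N(x')\cap S_2$ into $R_1(x)$. The paper merely wraps this in a proof by contradiction (assume $R_1(x)\subseteq S_1$, exhibit $x''\in N(x')\cap S_2$ with $x''\in R_1(x)$), which is cosmetically different but not a different argument. Two points of comparison are worth making explicit. First, the paper's proof quietly adds a hypothesis your version omits: it assumes the $S_2$-neighbor $x''$ is \emph{also} correctly classified ($|B_l^D(x'')|\geq\lceil k/2\rceil$) in order to place it in $A_1(i'+1)$, and its closing sentence restates the lemma as requiring ``at least one correctly classified neighbor of $x'$ in $S_2$.'' Whether this extra condition is needed depends on which description of the algorithm you read: under the textual growth rule of Section 3.4, points enter $R$ via $A(k+1)=\tilde A(k+1)\setminus R^k_1(x)$ regardless of their classification (classification only gates further growth), so your step is valid as written; under the pseudocode of Algorithm 2, misclassified instances are never added to $R_p$ at all, so your final step would require $z$ correctly classified, exactly as the paper assumes. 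You should pick one semantics and state it.

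Second, the reachability gap you flag ($x'\in R_1(x)$ could be severed by a misclassified intermediate instance, since Assumption 2 gives connectivity of $S_1$ but says nothing about the classification of instances along the connecting chain) is a genuine gap, but it is present in the paper's proof as well: the paper simply asserts ``From Assumption 2a, 2b and 2c let $i'$ be the first step when $x'\in A_1(i')$'' without justification. Your proposed remedies (restricting the lemma to $x'$ on a correctly classified chain from the seed, or requiring $x'$ adjacent to an already-admitted correctly classified instance) are more careful than what the paper does, so on this point your treatment is, if anything, an improvement rather than a deficiency.
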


\begin{proof}
Proof by contradiction. Let's assume when the Assumption of separation is violated, the simple structure identifying algorithm will not grow into $S_{2}$.

Then $ R_{1}(x) \subseteq S_1 $ will be true. We know there exits  $x'$ such that $x' \in S_1 $ and $ N(x') \cap S_2 \neq \emptyset$ and $|B_{l}^D(x')| \geq \lceil \frac{k}{2} \rceil$.

From Assumption 2a, 2b and 2c  let $i'$ be the first step when $x' \in A_{1}(i')$ and by definition  $A_{1}(i') \in R_{1}(x)$ thus $x' \in R_{1}(x)$. 

Let $x'' \in N(x') \cap S_2$ and assume $|B_{l}^D(x'')| \geq \lceil \frac{k}{2} \rceil$. And thus by definition $x'' \in A_1(i'+1)$ and $A_1(i'+1) \in R_{1}(x)$ so $x'' \in R_1(x)$. As  $x'' \in  S_2$ and  $x'' \in R_1(x)$ is true then $R_1(x) \subseteq S_1$ is not true. Hence our Assumption is wrong. 

In other words, when the Assumption of separation is violated, and there exists at least one correctly classified  instance $x'$ in $S_{1}$ and at least one correctly classified neighbor of  $x'$ in $S_{2}$, we grow into simple structure $S_2$

\end{proof}

The following theorem provides insights into under what conditions the algorithm may succeed for sure, even if Assumption 1 is not satisfied.

\begin{theorem}
If Assumption 1 (Separation) does not hold and the simple structure identifying algorithm begins with a seed instance \( x \in S_1 \), the algorithm will \emph{not} expand into \( S_2 \) if the following conditions are satisfied:  
\[
\forall x' \in S_1, \; N(x') \cap S_2 \neq \emptyset \quad \text{and} \quad |B_l^D(x')| < \left\lceil \frac{k}{2} \right\rceil,
\]
That is, all instances \( x' \in S_1 \) whose neighborhoods overlap with \( S_2 \) must be misclassified. Under these conditions, the region grown from \( x \), denoted as \( R_1(x) \), will remain contained within \( S_1 \), formally:
\[
R_1(x) \subset S_1.
\]
\end{theorem}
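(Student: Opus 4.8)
The plan is to argue by contradiction, in the same spirit as Lemma 1, but to exploit the growth rule in the opposite direction: because the modified algorithm expands only from \emph{correctly classified} instances, and every instance of $S_1$ whose neighborhood meets $S_2$ is assumed misclassified, no expansion step can ever cross the boundary from $S_1$ into $S_2$. Concretely, I would suppose for contradiction that $R_1(x) \not\subseteq S_1$, so that at least one instance of $S_2$ is eventually admitted. Let $i'+1$ be the \emph{first} step at which an $S_2$ instance enters, and fix $x'' \in A(i'+1) \cap S_2$. By minimality of $i'+1$, every point admitted earlier lies in $S_1$, that is $R_1^{i'}(x) \subseteq S_1$.

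The next step is to trace the origin of $x''$. Under the growth rule the candidate set is $\tilde A(i'+1) = \bigcup_{x' \in A_1(i')} N(x')$, built only from the correctly classified newly added points $A_1(i')$, so $x'' \in N(x')$ for some $x' \in A_1(i')$. This $x'$ is then (a) correctly classified, directly by the definition of $A_1$, and (b) an element of $S_1$, since $A_1(i') \subseteq R_1^{i'}(x) \subseteq S_1$ by the minimality just established. Since $x'' \in N(x') \cap S_2$, the instance $x'$ is precisely a bridge instance: $x' \in S_1$ with $N(x') \cap S_2 \neq \emptyset$.

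The contradiction is then immediate from the hypothesis. The theorem assumes every such bridge instance is misclassified, $|B_l^D(x')| < \lceil k/2 \rceil$, whereas $x'$ was just shown to be correctly classified. Hence no $S_2$ instance can be the first to enter $R_1(x)$, so none ever enters, and the algorithm terminates with $R_1(x) \subseteq S_1$ — exactly the non-expansion claim the theorem asserts. Whether the containment is proper depends only on which parts of $S_1$ remain reachable once the misclassified bridge instances are blocked from expanding; that is a separate question from leakage into $S_2$, so I would treat the stated $R_1(x) \subset S_1$ as the statement that the region never leaves $S_1$.

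I would flag two points of care. The essential subtlety is that \emph{correctly classified} must be read relative to the fixed full dataset $D$, as in the growth-rule construction, so an instance's classification does not drift as $R_1(x)$ grows; this is exactly what allows the single hypothesis on bridge instances to control every iteration simultaneously rather than step by step. The only genuine obstacle is bookkeeping in the first-entry argument: one must handle the base step (where $A_1(1)$ is extracted from $N(x)$ with seed $x \in S_1$) and must be sure that the point $x'$ from which the jump occurs is itself in $S_1$ rather than a freshly admitted $S_2$ point — and it is precisely the minimality of $i'+1$ that guarantees this.
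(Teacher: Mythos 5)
Your proof is correct, but it takes a genuinely different and noticeably stronger route than the paper's. The paper's own proof restricts attention to the special case where separation is violated \emph{at a single instance} $x'$ only, and argues directly: since $x'$ is misclassified, the algorithm never grows on it, so (under the pseudocode semantics, where misclassified instances are never even added to $R_p$) $x' \notin R_1(x)$, which simultaneously gives non-expansion into $S_2$ and the \emph{strict} containment $R_1(x) \subset S_1$. You instead run a first-entry minimality argument in the style of Lemma 1 and part (i) of Theorem 1: if an $S_2$ instance were ever admitted, the point it was expanded from would have to be a correctly classified bridge instance in $S_1$, contradicting the hypothesis that all bridge instances are misclassified. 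This handles arbitrarily many violating instances at once, so your argument actually proves the theorem as stated (the hypothesis quantifies over all bridge instances), whereas the paper's proof only covers its simplified single-instance scenario. Your two flagged points of care --- classification being fixed relative to $D$ so the hypothesis controls all iterations uniformly, and the base-step bookkeeping --- are genuine subtleties the paper glosses over. The one thing the paper's argument delivers that yours deliberately does not is the properness of the containment: by showing $x' \notin R_1(x)$ it exhibits an element of $S_1$ missing from $R_1(x)$. You are right that this is a separate question from leakage, and your caution is warranted, because properness is sensitive to which version of the growth rule one adopts: under the text's description, misclassified points in $A_2(k)$ are still added to $R$ (only not expanded from), in which case a bridge instance may well end up in $R_1(x)$ and strictness needs a different witness; under Algorithm 2's pseudocode, misclassified candidates are discarded outright, and the paper's strictness claim goes through (modulo the edge case where the seed itself is the bridge instance). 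So your reading of $R_1(x) \subset S_1$ as ``the region never leaves $S_1$'' is the defensible interpretation, and your proof establishes exactly that, in greater generality than the paper's.
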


\begin{proof}
Let's assume the Assumption of separation is violated at one instance only, i.e., $x'$.

We have $x' : x' \in S_1 $ and $ N(x') \cap S_2 \neq \emptyset$ and $|B_{l}^D(x')| < \lceil \frac{k}{2} \rceil$. 

But as $x'$ is misclassified, we won't grow on $x'$; hence, it doesn't matter if $ N(x') \cap S_2 \neq \emptyset$ is true or not. As we won't grow on $x'$, $x' \notin R_{1}(x)$  even though $x' \in S_1$. 

    Hence $R_{1}(x) \subset S_1$. Thus, the heuristics of not growing on misclassified instances should result in a good approximation of $S_1$ in this case.
    
\end{proof}

Theorem 2 is intuitive as if all the instances violating Assumption 1 are misclassified, then we will ignore them; hence, the identified simple structure would be a subset of the original one. 

When the Assumption of separation is violated, two different simple structures with different majority classes overlap. It is thus unlikely that none of the conditions in Lemma 1 or Theorem 2 will be satisfied. But we can assume that many instances in the set $S_1 \cap S_2$ will be miss-classified because of the unstructured overlapping region. Thus growth rule, along with Assumption 2 violation in practical data,  will stop the significant growth of a simple structure $S_j$ into its neighboring simple structure $S_k$. Thus, our algorithm will yield a good approximation of the underlying structure.

\subsection{More Heuristics for Practical Implementation}\label{subsec4}

Violations of assumptions 2b and 2c are common in practice and can impact identified structures. This, coupled with uneven data density, can cause sensitivity of the algorithm to the starting seed.
For example, if the starting seed is within the instances violating assumptions 2b or 2c, a specific and difficult-to-generalize structure may result. Conversely, violating Assumption 1 can cause some structures to become larger and less specific than the actual simple structure, which should be avoided. In reality, the actual simple structure will fall between these two extremes.
As is done for other randomized algorithms that may be sensitive to the initial seeds (e.g., \citep{na_research_2010}),  we use a multi-start seed approach and introduce a new tunable regularization parameter called SST (simple structure size threshold) to avoid undesired simple structures. 

Now $P$ independent seeds produce $P$ simple structures denoted by $R_{p} : p \in [1, \dots, P]$ .
If $e_{p}$ is the KNN error of a simple structure,  then any simple structure with size away from SST in either direction is penalized linearly, and regularized error is given by $\beta_{p}$ in \ref{regulerization}. Due to multiple starting seeds, we will get various structures with various sizes and 
SST helps avoid extremely small and large simple structures and aids in choosing a simple structure with an acceptable size from multi-start iteration.  
{\setlength{\abovedisplayskip}{1pt}
\setlength{\belowdisplayskip}{1pt}
\begin{equation} \label{regulerization}
    \beta_{p} = 
    \begin{cases}
        e_{p} + (\alpha_{l} \times (SST - |R_{p}|)) & |R_{p}| \leq SST \\
        e_{p} + (\alpha_{u} \times (|R_{p}| - SST)) & |R_{p}| > SST
    \end{cases}
\end{equation}
 In \ref{regulerization}, the $\alpha_{l}$  and $\alpha_{u}$ scale the penalty to the magnitude of the $e_{p}$ and control preference of the size of the simple structure. 
 When there is no to moderate deviation from the theoretical assumptions, then the algorithm is less sensitive to SST. The size of deviated simple structures will be in the extreme direction of SST, and the true simple structures will lie in between. Thus, a good approximation will always be found due to regularization; and even if a simple structure size differs from the given SST value, it will be found in a later iteration of the algorithm.
 However, large deviations from the assumptions may cause the deviated simple structure's size to vary significantly and the chosen structure will be sensitive to SST. To tune SST in such cases, we can evaluate the overall performance of all the simple structures for various SSTs, and examine the underlying structures with domain knowledge.

In practice, further iterations of subset searching for simple structures often result in very small sizes made up of unassigned instances from Assumption 2b and 2c violation. Therefore, we stop searching for simple structures after a predetermined percentage of the data is assigned to simple structures and assign unassigned instances to the nearest major simple structure centroid. 
The algorithm assumes a single value of K is appropriate for all simple structures, but the later structures may have less similarity than the earlier ones, as they are composed of rejected instances. To account for this, we recommend adapting the value of K by incrementing it in later iterations of the algorithm based on the specific dataset. 

The following pseudocode  \ref{alg:sub_algo_hur}  shows the simple structure identifying algorithm along with all the heuristics stated in the main paper. The algorithm uses heuristics of not growing recursively on the misclassified instances, multi-seed start, along with regularization and centroid allocation.

\begin{algorithm}[H]
    \caption{ Simple Structure Identifying Algorithm with Heuristics}\label{alg:sub_algo_hur}
    \begin{algorithmic}
    \Require KNN K, data set D, SST, $\alpha_{l}$, $\alpha_{u}$
    \State initialization $S = \{\}$ 
    \While {Data addressed $\leq$ 90\%}
     \State P = $\lceil 0.10 \times |D| \rceil$
     \State $Seed = \emptyset$
     \For {$s \in \{1,2,3, \cdots,P \}$} 
     \State sample $x_{s} \sim D \setminus Seed$ 
     \State $Seed \gets  Seed \cup x_s$
     \EndFor
     \State initialization $R_{j} = \emptyset$, 
     \For{$x_{p} \in Seed$} 
     \State initialization $R_{p} = \emptyset$,  $AK = \emptyset$ 
     \State $R_{p} \gets x_{p}$, $AK \gets N(x_{p})$
     \While {$AK \neq \emptyset$}
     \State Sample $i\sim AK$
     \If{$|B_{l}^{D}(i)| \geq \frac{K}{2}$}
     \State $R_{p} \gets R_{p} \cup i$
     \State $AK \gets AK \cup (N(i)\setminus R_{p})$
     \State $AK \gets AK \setminus i$
     \Else
     \State $AK \gets AK \setminus i$
     \EndIf
     \EndWhile
     \State $e_{p} \gets KNN(R_{p})$ 
     \If{$|R_{p}| \leq SST$} 
     \State $\beta_{p} \gets e_{p} + (\alpha_{l} \times (SST - |R_{p}|))$
     \Else
     \State $ \beta_{p} \gets e_{p} + (\alpha_{u} \times (|R_{p}| - SST))$
     \EndIf
     \EndFor
     \State $R_{j} \gets R_{p} : p \in argmin\{\beta_{1}, \beta_{2}, \cdots \beta_{p} \} $
     \State $D \gets D \setminus R_j$
     \State $S \gets S \cup \{R_j \}$ 
     \If{data addressed $\geq$ 0.5}
     \State Adapt K
     \EndIf
     \EndWhile
     \For{$x_{i} \in D$}
     \State $x_{i} \in R_{j} : j = argmin(dist(x_{i},C_{j}))$ 
     \EndFor
     \State return $S$
    \end{algorithmic}
\end{algorithm}

\section{Results}\label{sec2}

In this section, we report the algorithm's effectiveness in identifying simple structures within datasets, along with an in-depth comparison with Gaussian Mixture Models (GMMs). Through our analysis, we aim to demonstrate the algorithm's capacity to enhance predictive performance and model reasoning. Additionally, we investigate the robustness of our approach against deviations from Assumption 1, utilizing analytical as well as practical examples to provide insights into its reliability across various scenarios.

\subsection{Algorithm Robustness}

We generate synthetic data as shown in figure \ref{fig:0_percent_structure} in two dimensions for algorithm testing. The data consists of two independent normal distributions with variances of 4 and means of 1 and 15, creating a simple structure ($S_1$) with 286 data points and another simple structure ($S_2$) with 283 data points. Assumption 1 holds for the data. We relax the precise satisfaction of assumptions 2b and 2c.

Assumption 1 is often violated in practical data. To address this, we use the heuristic of no growth on misclassified instances. We evaluate the robustness of this approach by measuring the overlap between $S_1$ and $S_2$, by counting the percentage of $S_1$ instances with at least one of the $K$ nearest neighbors in $S_2$. We assess the predictability of simple structures by building decision trees on them.

Given the sensitivity of estimates to seed and Assumption violations, we assess estimated variability using bootstrapping. We generated 20 bootstrap samples for a scenario where $S_1$ is pushed towards $S_2$. Our approach utilizes an initial $K=6$, SST$=250$, $\alpha_l = 0.125$, and $\alpha_u = 0.3$, with 20\% data used as seed. We increase $K$ to 8 after addressing 40\% of the data.

Figure \ref{fig:overlap performance} displays the algorithm's predictive performance relative to deviations from Assumption 1. Each data point represents the mean bootstrap accuracy of simple structures for the mean percent bootstrap overlap. The accuracy of a single bootstrap sampling is a weighted average accuracy based on the corresponding simple structure's size. Shaded areas represent 95\% confidence intervals.

We compare decision trees learned on simple structures with those learned on the entire dataset using Bootstrap to estimate performance. We address handling unassigned instances in three ways: ignoring them, treating them as misclassified, or employing the centroid strategy. 
Here, for synthetic data, unassigned instances occur when the size of a simple structure is less than 50 or when some instances are left out after learning over 90\% of the data. Ignoring unassigned instances yields optimistic estimates, as some challenging instances are removed from simple structures (see figure \ref{fig:overlap performance}). Treating them as misclassified is a pessimistic estimate, as simple structures may classify many correctly. The centroid strategy provides fairer comparisons, consistently outperforming the whole dataset model. Hence, we solely report results using this strategy for subsequent analyses.

\begin{figure}[H]
     \centering
     \begin{subfigure}[b]{0.4\textwidth}
         \centering
         \includegraphics[width=\textwidth]{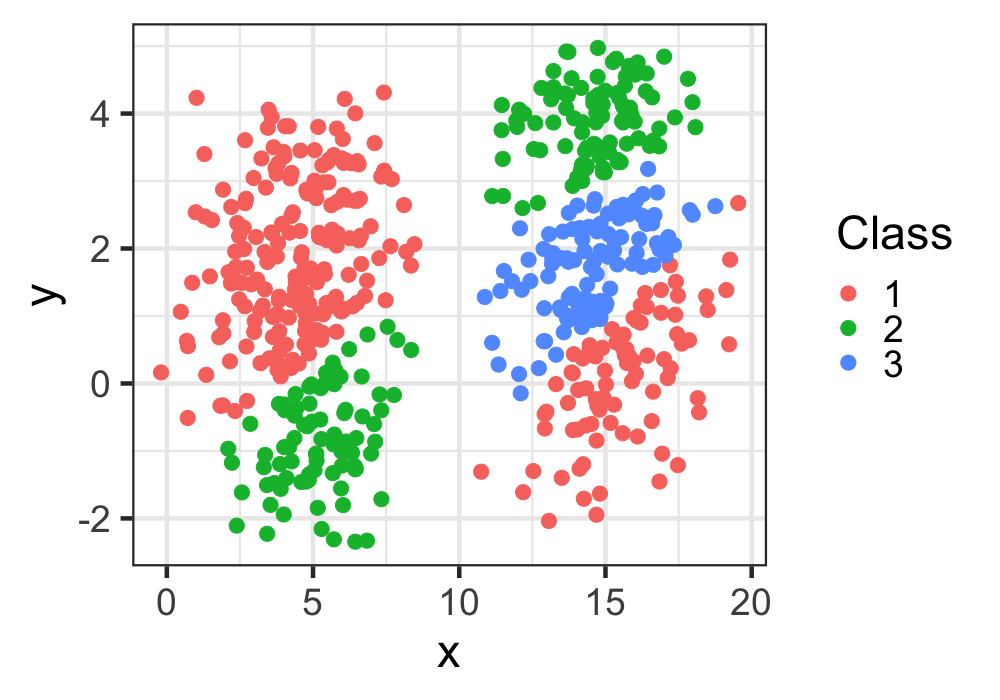}
         \caption{}
         \label{fig:0_percent_structure}
     \end{subfigure}
     \begin{subfigure}[b]{0.4\textwidth}
         \centering
         \includegraphics[width=\textwidth]{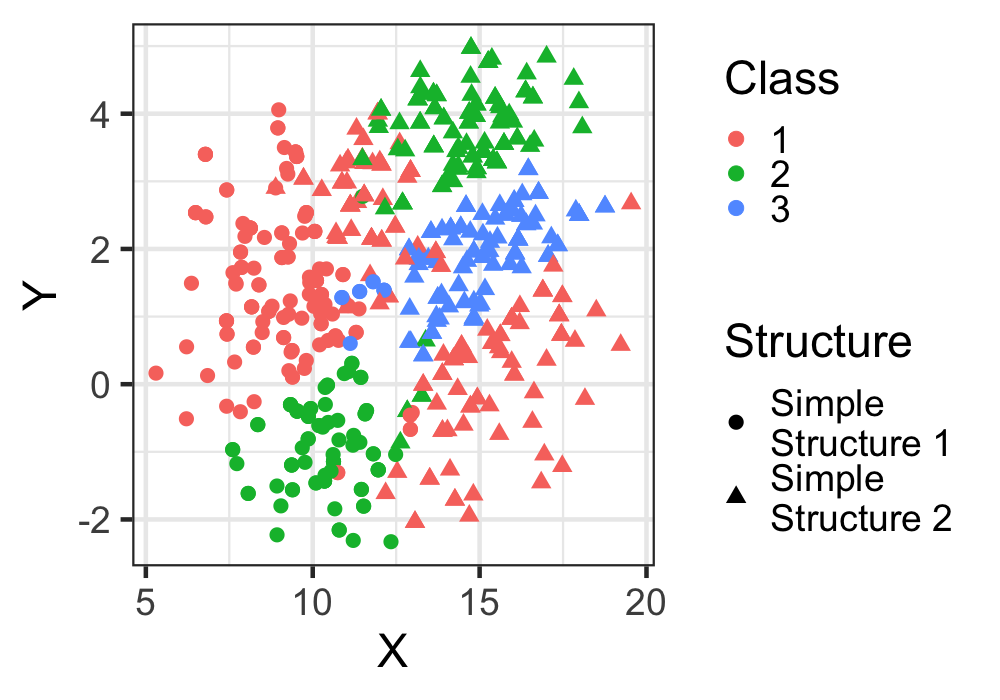}
         \caption{}
         \label{fig:28_percent_structure}
     \end{subfigure}
     \hfill
        \caption{(a) Parent synthetic data with two Gaussian distributions containing multiple classes: $S_1$ comprises subsets $S_{1}^{1}$ (majority class 1) and $S_{1}^{2}$ (majority class 2) linearly separated, while $S_2$ consists of subsets $S_{2}^{1}$ (class 2), $S_{2}^{2}$ (class 3), and $S_{2}^{3}$ (class 1)  linearly separated. (b) Identified simple structures using the simple structure identifying algorithm with a violation of Assumption 1, based on a bootstrap sample from (a).} 
        \label{fig:mid_overlap}
\end{figure}

Simple structure accuracy alone offers an incomplete view. To gain deeper insights, let $R_1$ and $R_2$ represent identified structures corresponding to $S_1$ and $S_2$. Precision, calculated as $\frac{|R_j \cap S_j|}{|R_j|}$, indicates the proportion of actual instances from the identified structure belonging to the ground truth. Recall, computed as $\frac{|R_j \cap S_j|}{|S_j|}$, informs us how many instances from the ground truth are captured by the identified structure.

\begin{figure}[H]
    \centering
    \includegraphics[width=0.75\textwidth]{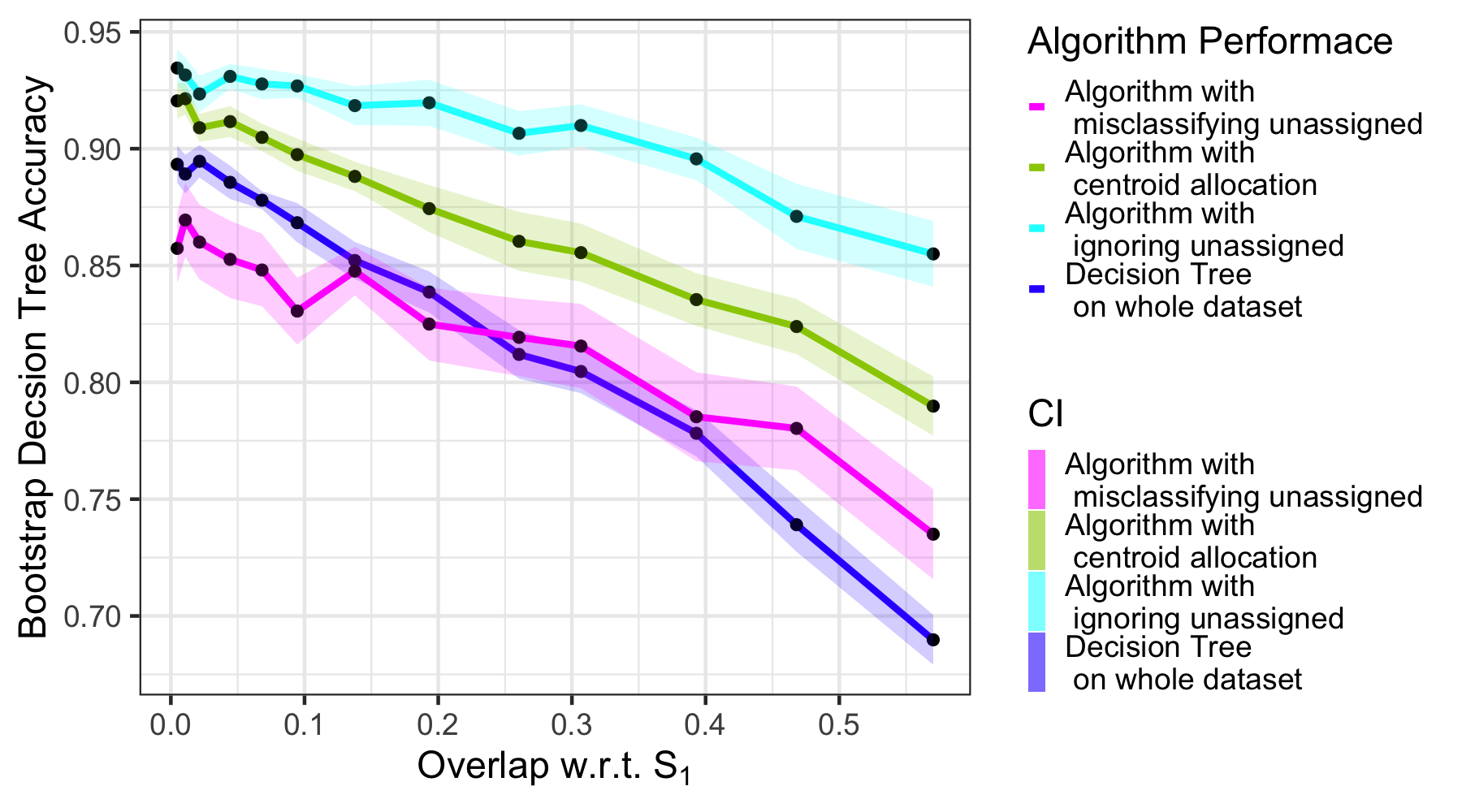}
    \caption{ 
    Comparison of bootstrap accuracy estimates for a single decision tree on the entire dataset versus an ensemble of decision trees on identified simple structures in synthetic data. Additionally, a comparison of various strategies for handling unassigned instances in the simple structure identifying algorithm. The x-axis denotes the violation of Assumption 1, represented as the percentage overlap of $S_1$.
    }
    \label{fig:overlap performance}
\end{figure}

Figure \ref{fig:precision_recall_comparison} displays bootstrap estimates of precision and recall with the same setup. For structures with less than 10\% overlap, precision, and recall typically exceed 85\%, never dropping below 80\%, indicating robustness to light to moderate deviation from Assumption 1. Higher overlap generally maintains mean values above 80\%, albeit with wider confidence intervals. Thus, we get approximations of simple structures resembling Figure \ref{fig:28_percent_structure} frequently. But for very high overlap $>40\%$ sharp fall in recall shows the complete failure of the algorithm.

In figure \ref{fig:precision_recall_comparison}, the consistent improvement in recall and a slight drop in precision after centroid allocation demonstrates its efficacy. Assigning unassigned instances to the nearest centroid mostly yields correct assignments, but occasional incorrect assignments lead to a slight precision decrease. For overlaps $>15\%$, precision drops notably due to significant Assumption 1 deviation. Overlapping adjacent simple structures affects centroid calculation, resulting in more frequent incorrect assignments.

\begin{figure}[H]
    \centering
    \includegraphics[width=0.75\textwidth]{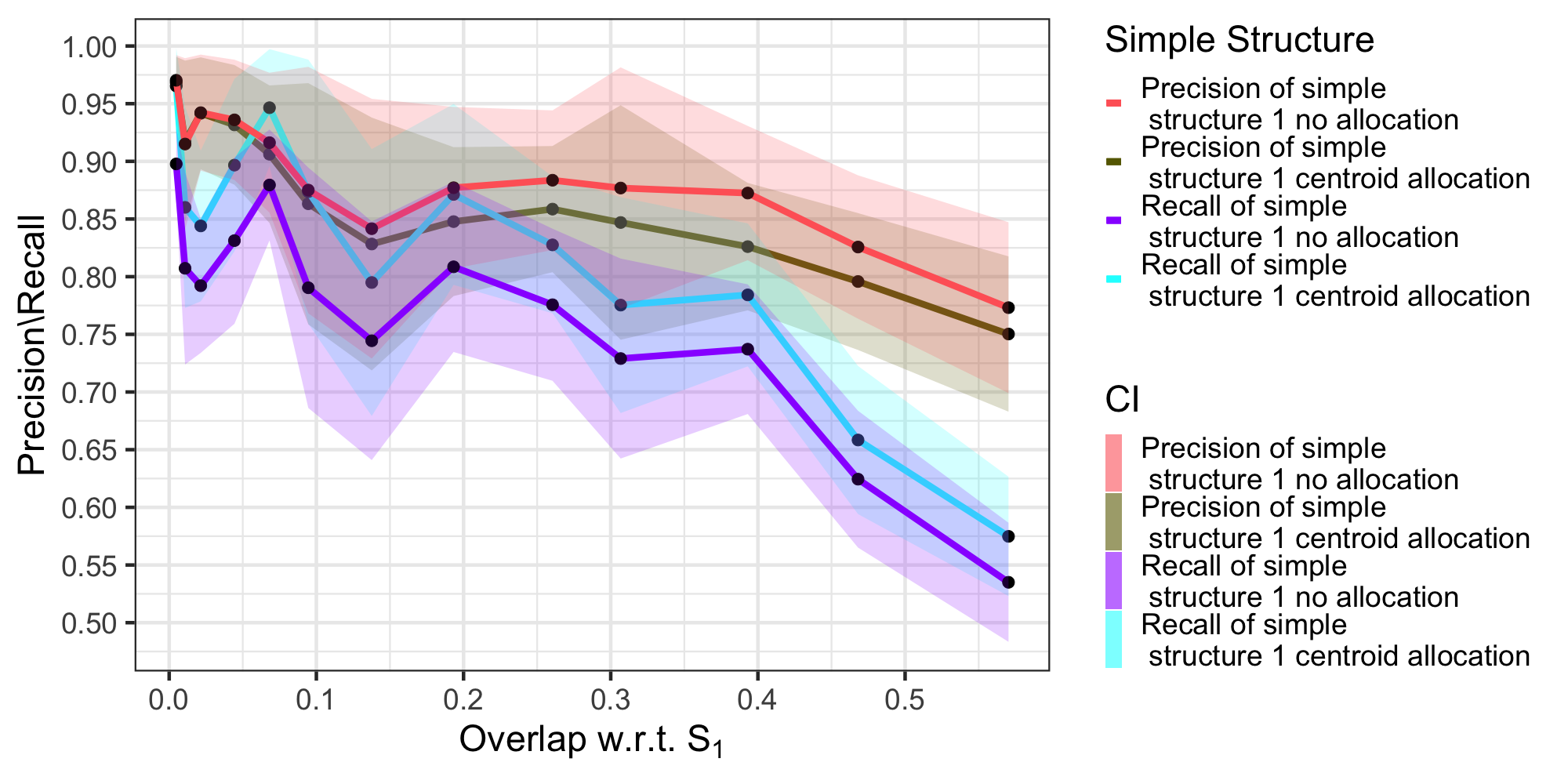}
    \caption{Assessment of the robustness of the simple structure identifying algorithm regarding violation of Assumption 1. The x-axis illustrates the degree of violation, denoted as the percentage overlap of $S_1$. Additionally, a comparison of bootstrap estimates of precision and recall for simple structure $S_1$ from synthetic data before and after centroid allocation in the simple structure identification algorithm.}
    \label{fig:precision_recall_comparison}
\end{figure}

\subsection{Comparison with Gaussian Mixture Models (GMMs)}

Identifying simple structures has parallels to well-known mixture models. 
GMMs are probabilistic models utilized to discern normally distributed sub-populations within larger population~\citep{wan_novel_2019}. However, our method is fundamentally different from clustering.  We claim that simple structures cannot be identified solely through clustering. In such instances, each component of the GMMs is perceived to represent either a single distribution or a simple structure. GMMs segment the data through clustering, with each cluster representing an underlying Gaussian distribution.

For such models, the parameters of the underlying distributions are typically estimated using the Expectation-Maximization (EM) algorithm~\citep{bishop_pattern_2006}. EM clustering serves as a natural benchmark for our method. However, our method diverges from GMMs by assuming that the data comprise simple structures composed of one or more classes separated from each other, without necessarily adhering to Gaussian distributions. Conversely, GMMs assume that data constitute a combination of multiple Gaussian distributions, and further, in GMM classification, it is assumed that a single Gaussian only corresponds to a single class~\citep{wan_novel_2019}. These assumptions may not hold true in real-life scenarios. Consequently, we demonstrate that our proposed simple structures cannot be effectively identified using GMMs.

To compare the simple structure identifying algorithm against GMM, we create different scenarios of the synthetic data as shown in the figure~\ref{fig: SS_vs_GMM }.
We create two data types for Gaussian and exponential distribution as shown in figure \ref{fig: SS_vs_GMM }

\begin{figure}[h]
    \centering
    \begin{subfigure}{0.45\textwidth}
        \centering
        \includegraphics[width=\textwidth]{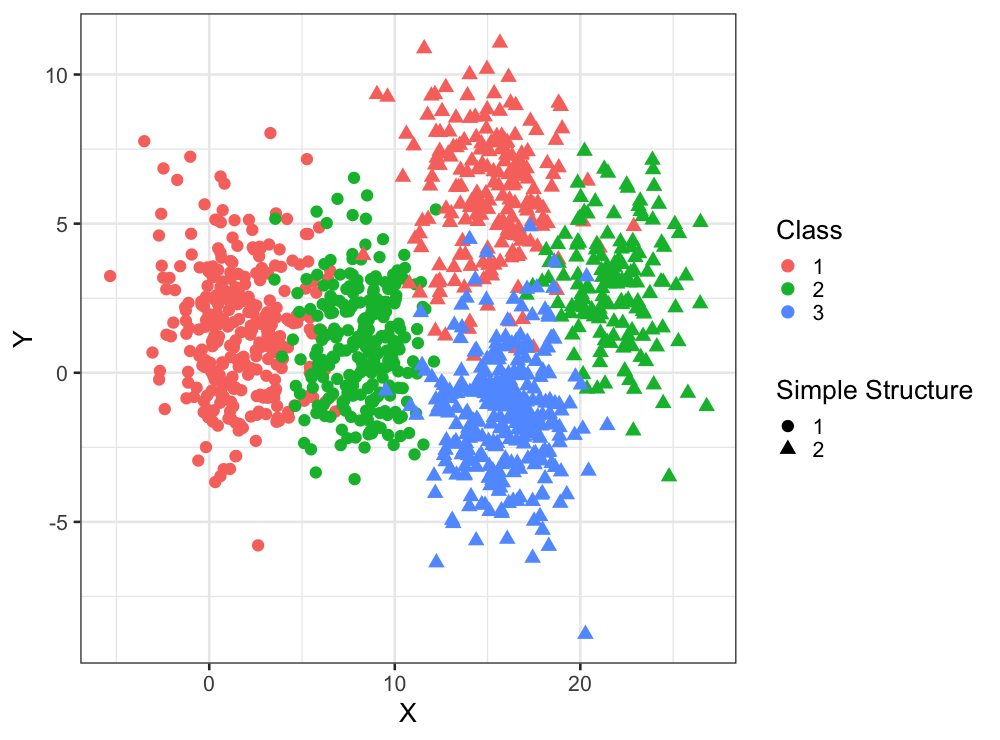}
        \caption{}
        \label{fig:scenario_1}
    \end{subfigure}
    \hfill
    \begin{subfigure}{0.45\textwidth}
        \centering
        \includegraphics[width=\textwidth]{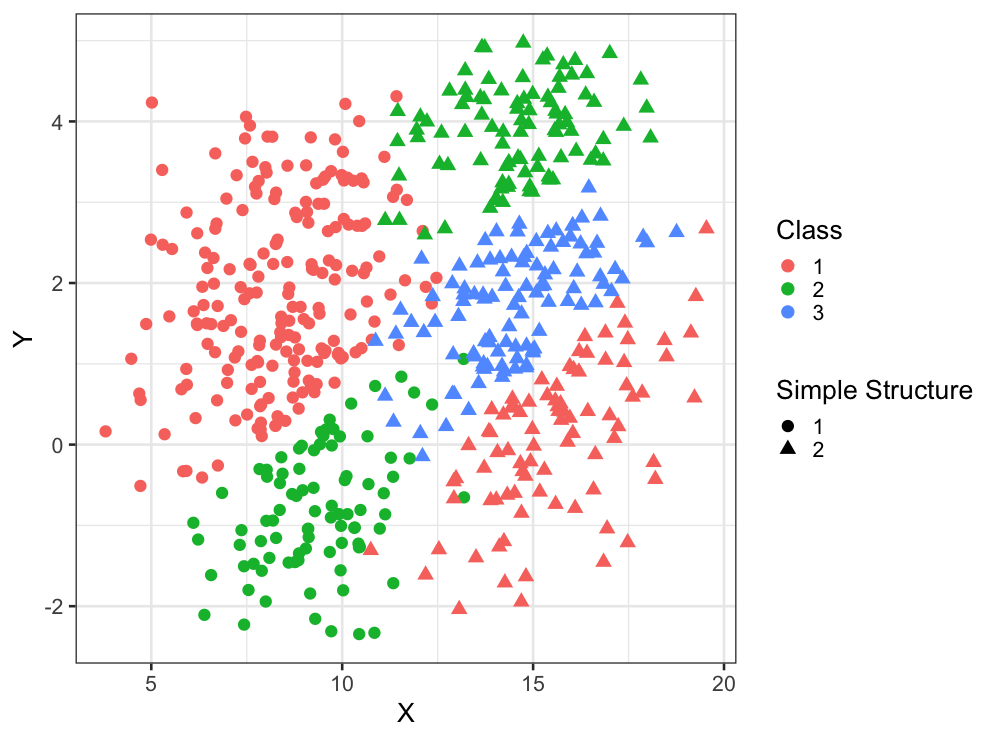}
        \caption{}
        \label{fig:scenario_2}
    \end{subfigure}
    
    \medskip
    
    \begin{subfigure}{0.45\textwidth}
        \centering
        \includegraphics[width=\textwidth]{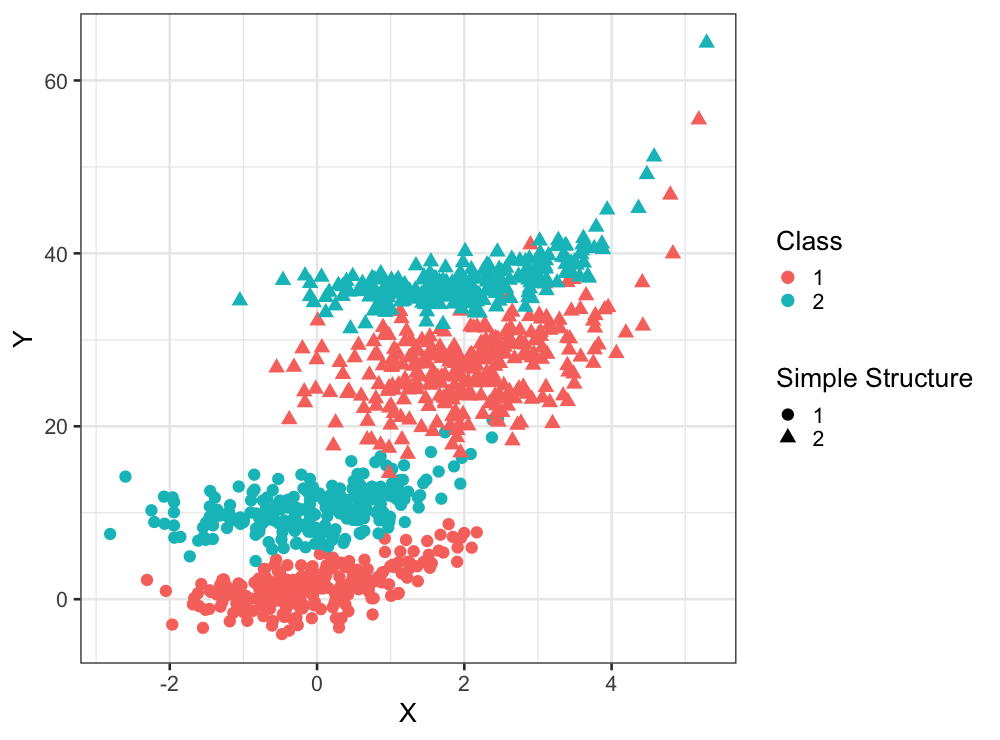}
        \caption{}
        \label{fig:scenario_3}
    \end{subfigure}
    \hfill
    \begin{subfigure}{0.45\textwidth}
        \centering
        \includegraphics[width=\textwidth]{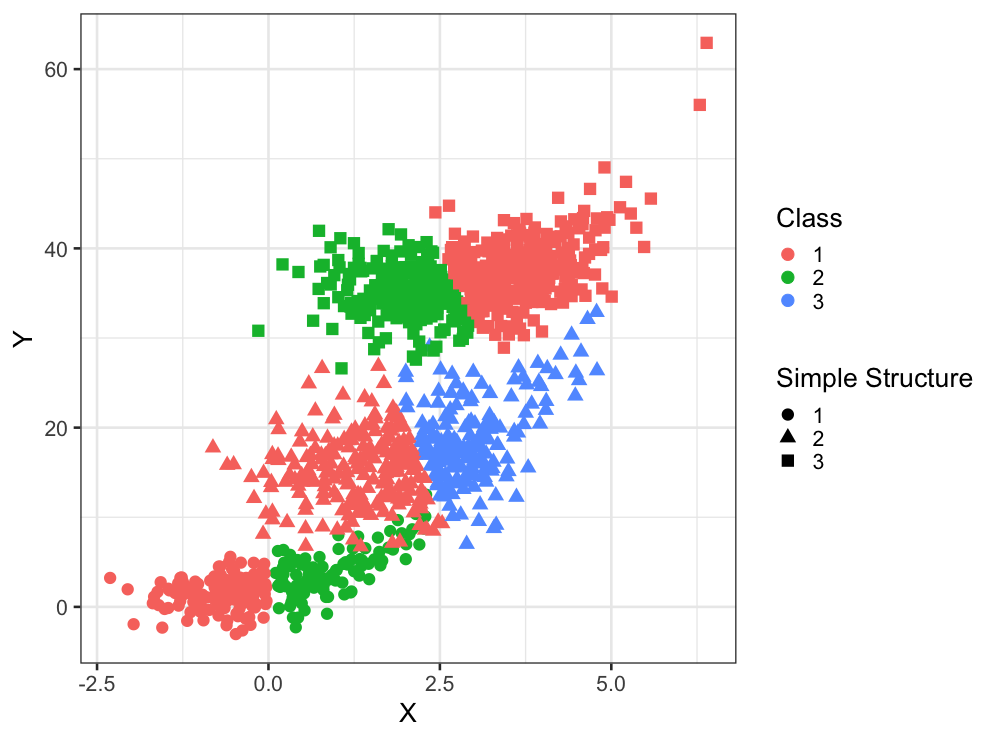}
        \caption{}
        \label{fig:scenario_4}
    \end{subfigure}
    \caption{Testing the robustness of the Simple Structure Algorithm and Gaussian Mixture models in identifying underlying simple structures across various data distributions. (a) Scenario with a Single Gaussian distribution containing a single class. (b) Scenario with a Single Gaussian distribution containing multiple classes. (c) Scenario with a Single Exponential distribution containing a single class. (d) Scenario with a Single Exponential distribution containing multiple classes.}
    \label{fig: SS_vs_GMM }
\end{figure}

We expect our algorithm to identify only one simple structure per simple structure of ground truth regardless of the number of distributions within it. GMMs approximate underlying distributions using EM, thus we expect it to identify single cluster for a given distribution regardless of the number of classes in it. The precision and recall of GMM and simple structure algorithm can be seen in table \ref{tab:gmm_ss_Precision_recall}.

In scenario 1, where all GMM assumptions hold, GMM achieves near-perfect identification, while our method struggles due to Assumption 1 violation. In scenario 2, where the Gaussian model is divided into multiple classes, GMM exhibits high precision but low recall, splitting structures rather than identifying them. 
Our method achieves moderate precision due to Assumption 1 violation, with comparable performance overall. 
In scenario 3, both methods have high precision, but our method shows slightly better recall. Moving away from Gaussian distributions, GMMs struggle, especially in scenario 4, where single exponential distributions are divided into multiple classes, resulting in poor recall. Due to GMMs' poor recall and structure splitting, post-processing steps are needed to identify underlying structures. In contrast, our method consistently achieves high recall, striving to recognize the underlying structures. Thus, our algorithm consistently outperforms GMMs in identifying underlying simple structures. Therefore, our method is robust and natural at identifying the underlying local simple structures.

\begin{table}[h]
\begin{tabular}{ccccccccc}
\hline
\multicolumn{2}{c}{\multirow{2}{*}{\textbf{Scenario}}}   &           & \multicolumn{2}{c}{\textbf{Structure 1}}                   & \multicolumn{2}{c}{\textbf{Structure 2}}                              & \multicolumn{2}{c}{\textbf{Structure 3}}      \\
\multicolumn{2}{c}{}                                                                 &           & \textbf{SS} & \textbf{GMM} & \textbf{SS} & \textbf{GMM} & \textbf{SS} & \textbf{GMM} \\ \hline
\multirow{4}{*}{\begin{tabular}[c]{@{}c@{}}\textbf{Gaussian} \\ \textbf{Distribution}\end{tabular}}    & \multirow{2}{*}{Single Class}   & Precision & 79.9   & 97.42  & 83.64   & 99.73    & NA   & NA   \\
&        & Recall    & 87.47   & 96.9  & 82.38   & 97.84    & NA   & NA    \\ \hhline{~--------}
& \multirow{2}{*}{Multiple Class} & Precision    & 72.79    & 95.85    & 72.41   & 95.99    & NA    & NA   \\  
&        & Recall    & 97.63     & 73.48         & 97.51     & 66.74   & NA      & NA   \\ \hline
\multirow{4}{*}{\begin{tabular}[c]{@{}c@{}}\textbf{Exponential} \\ \textbf{Distribution}\end{tabular}} & \multirow{2}{*}{Single Class}   & Precision & 90.57    & 97.27           & 97.54    & 95.67    & NA      & NA      \\
&        & Recall    & 85.82    & 81.34           & 79.80    & 78.88    & NA      & NA       \\ \hhline{~--------}
& \multirow{2}{*}{Multiple Class} & Precision     & 85.10    & 99.41    & 89.12   & 91.92    & 95.64           & 99.46     \\ 
&        & Recall    & 96.69    & 87.99  & 84.97   & 66.05   & 83.86   & 72.10     \\  \hline
\end{tabular}
\caption{The table compares the precision and recall of the simple structures (SS) identified by the Simple Structure Algorithm with those of the distributions identified by Gaussian Mixture Models (GMMs) across all scenarios depicted in Figure~\ref{fig: SS_vs_GMM }.}
\label{tab:gmm_ss_Precision_recall}
\end{table}

\subsection{Simple Structure Interpretability}

After the validation of the robustness of the algorithm to assumptions, we verify that simple structures learned by the models are interpretable.  We expect the models learned on the simple structures to be very simple and interpretable compared to a whole-data model.  Thus, we learn simple decision trees on the simple structures identified by the algorithm for data in figure \ref{fig:28_percent_structure}. Figure \ref{fig:simple_vs_whole} shows the decision tree boundaries of both models learned on the entire data and models on simple structures. 
The tree on the entire data has an accuracy of 81\% compared to 86\% for structure 1 and 82\% for structure 2 in the two discovered simple structures. 

More importantly, figure \ref{fig:simple_vs_whole} shows that decision boundaries learned on the entire data are more complex without better predictive ability than models on the simple structures. 
The root node of the decision tree on the entire dataset is $Y < 3.4$, which has no meaning from the original distribution's perspective. It splits the $S_2^{2}$ at an arbitrary location, and it is far-fetched to assume that class 1 is not present for any values $Y > 3.4$.
The decision boundaries from one part of the data create complex structures in the other part of the data. The boundary  $Y < 0.1$ and $x \geq 13$ classifies the $S_{1}^{2}$ correctly but splits the $S_{2}^{3}$ at arbitrary position. We thus need one more decision boundary $X > 12$ and $Y < 1.4$ to classify the instances in $S_{2}^{3}$ correctly, increasing the complexity of the model.
Even though the top-down model minimizes global loss but creates complex decision boundaries. Thus the decision boundaries learned by our ensemble models are much simpler than a global model.

\begin{figure}[H]
    \centering
    \includegraphics[width=0.7\textwidth]{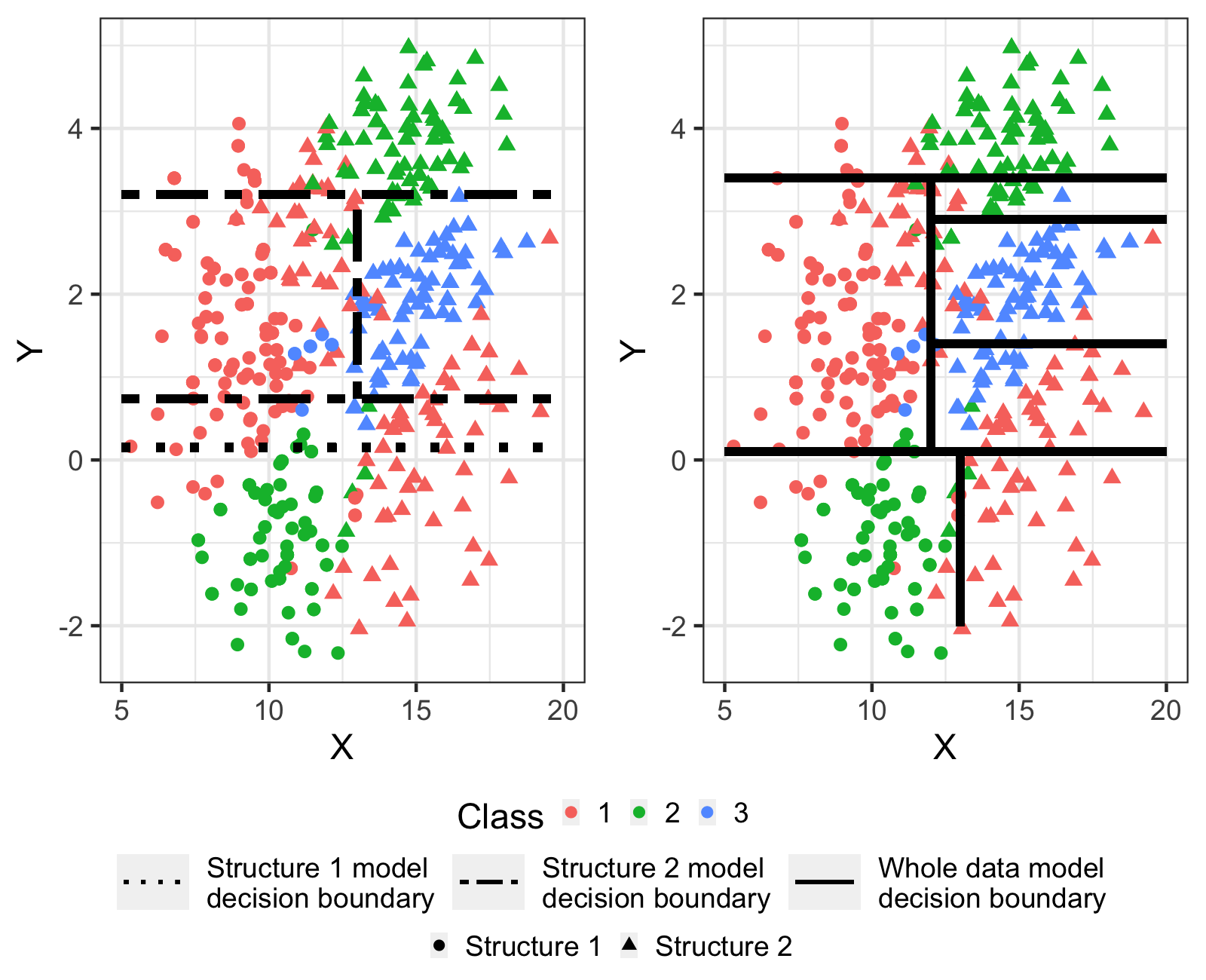}
    \caption{Comparison of decision boundaries learned by decision tree on the entire dataset (right) versus decision boundaries learned on the individual simple structures (left).}
    \label{fig:simple_vs_whole}
\end{figure}

\subsection{Simple Structure Performance on Real Datasets}

Next, we report how the algorithm uncovers meaningful simple structures and offers insights into real-life data. For this, we used popular open-source datasets from UCI and Kaggle, applying our Simple Structure (SS) algorithm to illustrate its performance and interpretability advantages.

Table~\ref{tab:UCI} summarizes the classification results on fourteen datasets of varying sizes, numbers of classes, and feature dimensions. We compare our SS method (using Logistic Regression built on each identified SS, denoted ``SS (LR)’’ in the table) to four established approaches:\emph{(i)} Logistic Regression (LR), using the \emph{entire} feature set,
\emph{(ii)} XGBoost, and
\emph{(iii)} Clustering-based LR, where we identify clusters in the data (to mimic Gaussian mixture models) and then fit LR on these clusters for comparison with our SS approach. In Table~\ref{tab:UCI}, we report the classification accuracy (with AUPRC in parentheses) using 5-fold cross-validation for all methods. 
For each fold, we employed bootstrap sampling on the training data to tune hyperparameters, using out-of-bag (OOB) accuracy to select the best configuration and then evaluated the resulting model on the held-out fold. 

In XGBoost, we searched over the number of trees (2 to 10), tree depth (2 to 10), learning rate (0.1 to 0.001), and both L1 and L2 regularization (0 to 5). For Logistic Regression, we selected among standard ranges of regularization strength and penalty type (e.g., L1 vs.\ L2). 
Because SS and the clustering-based LR rely on subgroup identification (i.e., simple structures or clusters), their final accuracy and AUPRC are averaged across these subgroups, weighted by the relative sizes of each subgroup to avoid biasing metrics toward particularly large or small clusters.
In both the clustering and SS identification algorithms, Gower distance was used for neighborhood calculations in mixed-type data~\cite{ahmad_survey_2019}. For clustering, we adopted the Partitioning Around Medoids (PAM)~\citep{Kaufman_pam_1990} algorithm, determining the optimal number of clusters via the silhouette metric.

\begin{table}[h]
\resizebox{\textwidth}{!}{%
\begin{tabular}{|l|c|c|c|c|c|c|c|c|}
\hline
Dataset                                                                                                                           & \begin{tabular}[c]{@{}c@{}}No. \\ Rows\end{tabular} & \begin{tabular}[c]{@{}c@{}}No. \\ Classes\end{tabular} & \begin{tabular}[c]{@{}c@{}}No. \\ Features\end{tabular} & SS (LR) & \begin{tabular}[c]{@{}c@{}}Whole Data \\ (LR)\end{tabular}  & XGBoost   & \multicolumn{1}{l|}{\begin{tabular}[c]{@{}l@{}}Clustering\\ (LR)\end{tabular}} \\ \hline

\begin{tabular}[c]{@{}l@{}}Online News \\ Popularity~\citep{fernandes_proactive_2015}\end{tabular}  
& 5900   & 2  & 59   & \textbf{72 (70)} & 65 (69)  &67 (70)   & 64 (62)    \\ \hline
\begin{tabular}[c]{@{}l@{}}AIDS Clinical \\ Trial~\citep{hammer_trial_1996}\end{tabular}                                                                        & 2139  & 2  & 22  &\textbf{ 83 (87)} & 78 (89)  & 77 (50)  & 78 (60)    \\ \hline
\begin{tabular}[c]{@{}l@{}}Drug Consum- \\ ption  Risk~\citep{fehrman_five_2017} \end{tabular}                                                                   & 1884   & 2  & 11    & \textbf{81 (80)} & 68 (76)   & 68 (62)   & 67 (70)        \\ \hline
\begin{tabular}[c]{@{}l@{}}Covid-19 \\ Mortality~\citep{nizri_covid19_2020}\end{tabular}                
& 1340  & 2  & 12  &\textbf{ 64 (66)} & 54 (56) & 56 (57)   & 53 (56)     \\ \hline

\begin{tabular}[c]{@{}l@{}}Credit Card \\ Default~\citep{yeh_comparisons_2009}\end{tabular}           
& 10000  & 2   & 24    & 74 (73) & 67 (66) & \textbf{77 (87)}   & 66 (73)            \\ \hline
\begin{tabular}[c]{@{}l@{}}Telco \\ Churn~\citep{ibm_telco_customer_churn}\end{tabular}                     
& 7043  & 2    & 20    & \textbf{83 (92)} & 80 (93)  & \textbf{84 (76)}   & 79 (75)      \\ \hline
Nursery~\citep{olave_manuel_application_2000}                                                                                     
& 12630  & 3   & 8   & 96 (98) & 93 (97) & \textbf{100 (100)} & 94 (98)              \\ \hline

\begin{tabular}[c]{@{}l@{}}Wine \\ Quality~\citep{paulo_cortez_wine_2009}\end{tabular}                   
& 6497 & 3  & 12  & 79 (61) & 78 (51)  & \textbf{84 (65)}   & 78 (74)          \\ \hline
\begin{tabular}[c]{@{}l@{}}Airline Passenger \\ Satisfaction~\citep{teejmahal20_airline_satisfaction_2021}\end{tabular} 
& 10000   & 2  & 22  & 89 (77) & 87 (89)  &\textbf{96 (99)}   & 88 (85)    \\ \hline
DNA~\citep{mlbench_R}                                                                                         
& 3186  & 3  & 180   & 92 (98) & 92 (98)  & \textbf{96 (98) }  & 87 (87)      \\ \hline
\begin{tabular}[c]{@{}l@{}}Online \\ Phishing~\citep{abdelhamid_phishing_2014}\end{tabular}              
& 1250    & 2   & 9   & 92 (92) & 92 (97)  & \textbf{93 (98)}   & 92 (92)   \\ \hline

\begin{tabular}[c]{@{}l@{}}Heart \\ Disease~\citep{detrano_international_1989}\end{tabular}                         
& 918  & 2      & 12   & 86 (67) & 86 (92)  & \textbf{89 (94)}   & 86 (82) \\ \hline
\begin{tabular}[c]{@{}l@{}}Obesity \\ Level~\citep{palechor_dataset_2019} \end{tabular}                                                                         & 2105    & 7   & 17   & 88 (84) & \textbf{94 (95)}  & \textbf{96 (99)}   & 84 (97)      \\ \hline
\begin{tabular}[c]{@{}l@{}}Student \\ Dropout~\citep{martins_early_2021}\end{tabular}                     
& 4424  & 3  & 35  & 67 (55) & \textbf{77 (74) } & \textbf{78 (77)}   & 71 (60)         \\ \hline

\end{tabular}
}
\caption{Comparison of classification performance between the Simple Structure (SS) model and standard machine learning approaches (Logistic Regression, XGBoost) across various datasets.}
\label{tab:UCI}
\end{table}

Our Simple Structure (SS) identification algorithm involves three primary hyperparameters: (\emph{i})~$K$, which determines the size of each local neighborhood; (\emph{ii})~the Simple Structure Threshold ($SST$), which specifies permissible structure sizes; and (\emph{iii})~the number of starting seeds that initiate the search for candidate structures. Although we do not provide a full sensitivity analysis here due to space constraints, we summarize the key considerations guiding our tuning process. A smaller $K$ produces very localized, potentially over-specialized structures, whereas a larger $K$ yields broader, more generic clusters that may reduce interpretability. Empirically, we observed that there is typically a narrow, dataset-dependent range of $K$ values that produces moderately sized structures. Moreover, we fix the number of starting seeds to 10\% of the dataset, since adding more seeds increases computational complexity without noticeably improving results; this proportion generally ensures sufficient coverage of the data to avoid discovering only very small or overly generic structures.
We introduced $SST$ to filter out extreme cases—whether excessively small or large—that may arise from the multi-seed process. In this work, we set $SST$ to 10\% of the dataset size, effectively discarding outlier structures while still allowing for a variety of meaningful subgroups. Because $SST$ depends on $K$, its viable range also tends to be narrow once smaller, redundant structures are removed. While our fixed choice of $SST$ may not be optimal for every dataset, the inherent penalty on unreasonably sized structures helps ensure that useful subgroups consistently emerge. Finally, adjusting the number of seeds and $SST$ can shift the order or nature of the identified structures, but once $K$ and $SST$ are reasonably tuned, the SS algorithm reliably recovers interpretable subgroups without significantly compromising accuracy.

Table~\ref{tab:UCI} demonstrates that our SS (LR) approach typically achieves classification accuracy and AUPRC scores that are on par with, and in several instances exceed, those of standard logistic regression (LR) using all features. Moreover, SS (LR) remains competitive against more complex methods such as XGBoost, occasionally outperforming the latter (e.g., in the COVID-19, Drug Addiction, and Online News Popularity datasets) despite XGBoost’s extensive hyperparameter tuning and flexible modeling capabilities.

\section{Discussion}\label{sec3}

Building on our experimental findings, this section explores the broader implications of adopting the Simple Structure (SS) identification algorithm and the reasons behind its robust performance—even when certain foundational assumptions are not strictly satisfied. 

In our synthetic and real data experiments, the SS approach maintains strong predictive accuracy, often matching or surpassing more complex models. 
The concept of simple structures and the algorithm designed to identify them are fundamentally grounded in Assumption 1 (simple structures are well separated) and Assumption 2 (no isolated subsets within simple structures). However, real-world datasets rarely conform perfectly to these assumptions, often exhibiting overlapping structures and inconsistencies. Despite these deviations, robustness experiments on synthetic data demonstrate that the SS model maintains stable performance, with only a moderate decline in accuracy and precision-recall metrics. This resilience suggests that the algorithm is capable of identifying meaningful substructures even when ideal theoretical conditions are not fully met, reinforcing its applicability in practical settings.
The effectiveness of centroid allocation, as depicted in figure \ref{fig:precision_recall_comparison} hinges on the Assumption that centroids provide a sufficiently accurate approximation of the underlying structures. If the identified simple structures centroids deviated significantly from the ground truth, its efficacy in consistently improving recall might not have been observed. The algorithm thus robustly identifies the underlying simple structures.

We also claim that identifying simple structures within data simplifies the learned model. Figure \ref{fig:simple_vs_whole} shows that a better way to handle data is to acknowledge the existence of two simple structures by learning a decision boundary at the root as $x<13$, with decision boundaries in either structure not interfering with adjacent structures. Top-down models won't learn such roots because they don't acknowledge the existence of simple structures, whereas our algorithm recognizes simple structures. This underscores the capability of our bottom-up simple structure approach to identify diverse local meaningful simple structures, unlike top-down models focused solely on reducing global loss.

The simple structures identified by our algorithm are not intended to serve as the final prediction rule but rather as a domain-relevant framework for partitioning the data into coherent subgroups. Each of these subgroups can then be handled by specialized, yet still interpretable, models such as small decision trees or logistic regression. This approach differs fundamentally from both single, global decision trees, which may overlook important variations across subgroups, and large, universal models like XGBoost or neural networks, which often optimize for predictive performance at the expense of interpretability. By first identifying meaningful, simple structures and then fitting lightweight, localized models within each, our method achieves a balance between subgroup-level interpretability and predictive performance, making it a compelling alternative to traditional black-box models.

Thus, similar to our approach on synthetic data, for real datasets in Table~\ref{tab:UCI} ensemble of localized logistic regression models yields comparable performance against complex models such as XGBoost and often outperforms them.
One of the key advantages of simple models such as logistic regression is their transparency: each feature’s coefficient directly represents its log-odds contribution, allowing practitioners to clearly understand how a given predictor influences the outcome while holding others constant. In contrast, models like XGBoost and neural networks introduce additional layers of complexity that obscure individual feature effects. XGBoost aggregates decisions across numerous trees, resulting in a diffuse, non-trivial prediction mechanism that cannot be easily distilled into a single interpretable formula. Similarly, neural networks rely on multiple hidden layers and nonlinear transformations, making their internal decision-making process inherently opaque. By leveraging simple structures to segment the data and applying logistic regression within each, our method preserves the clarity of interpretable models while achieving accuracy comparable to these more complex approaches.

Although clustering or GMM-based approaches are established alternatives to our method, our results show that the simple structures identified by our algorithm differ substantially from GMM-derived distributions. As illustrated in Figure \ref{fig: SS_vs_GMM }, a simple structure is not necessarily tied to a single distribution or class, whereas Gaussian Mixture Models (GMMs) rely on parametric assumptions (e.g., normality) and the Expectation-Maximization (EM) algorithm. GMMs can perform well when these assumptions hold but often degrade with non-normal data or multiple classes that do not fit neatly into Gaussian components. In contrast, our SS approach does not depend on any particular distribution, enabling it to detect meaningful substructures—even in heterogeneous settings—and robustly capture diverse underlying patterns.

Empirical results on real-life data (Table~\ref{tab:UCI}) indicate that our Simple Structure (SS) algorithm performs consistently well in domains with complex feature interactions and diverse behavioral patterns. Notably, in the first four datasets, our method—an ensemble of logistic regression models fitted on top of simple structures—demonstrates strong performance across both accuracy and AUPRC metrics.
Three of these four datasets involve behavioral patterns or precision medicine applications, where localized models naturally align with the data’s inherent complexity. For instance, COVID-19 treatment outcomes are heavily influenced by comorbidities, which affect patients differently~\citep{djaharuddin_comorbidities_2021}. In precision medicine, it is well established that treatments are not universally effective across entire populations—success and failure often depend on individual patterns that form distinct local structures~\citep{johnson_precision_2021}. Similarly, news popularity is shaped by reader preferences, which vary significantly across demographics, and drug usage or addiction follows individual behavioral tendencies. While each person may have unique characteristics, subsets of individuals share similar tendencies, allowing our SS algorithm to uncover and exploit these local patterns. By identifying and modeling these structures separately, SS often outperforms global methods, as it captures crucial distinctions that a single, overarching model might overlook.

In datasets such as credit card default, telco churn, nursery admission, wine quality, and airline satisfaction, SS outperforms global logistic regression but does not surpass XGBoost. This suggests that while some local structures exist, they are weaker or less critical to prediction. However, in settings where interpretability is as valuable as accuracy, such as customer churn analysis, our method provides a transparent alternative, helping decision-makers extract actionable insights from structured customer segments.

For datasets like heart failure, DNA classification, and online phishing detection, SS performs comparably to global logistic regression, indicating that strong simple structures may not exist. SS struggles most in datasets with globally simple structures, such as obesity prediction and student dropout analysis, where even a basic logistic regression model rivals XGBoost, making local segmentation and complex modelling unnecessary.
Overall, SS excels in datasets where localized patterns influence outcomes, allowing for specialized, interpretable decision boundaries while maintaining competitive accuracy. This makes SS particularly valuable in applications where both predictive performance and explainability are essential.

The Benefit of Simple Structure lies in discovering multiple local structures defined by a compact set of influential features, thereby reducing redundant, complex feature interactions within each subregion. Each simple structure uses only the features most relevant to that local subset, allowing for clearer decision boundaries and more interpretable results. Moreover, these local submodels jointly approximate the global data distribution more effectively than a single monolithic model, particularly in domains where nuanced, domain-specific interactions are crucial.

Overall, the SS algorithm’s bottom-up approach, combined with its ability to reliably identify meaningful local structures even when theoretical assumptions are mildly violated, makes it an attractive solution for tasks requiring both accuracy and interpretability.

\section{Conclusion and Future Work}

Our Simple Structure (SS) identification algorithm offers an appealing balance between accuracy, interpretability, and robustness across a range of datasets. By decomposing data into multiple local substructures, SS reveals context-specific patterns that global models often overlook. As demonstrated in our experiments, SS combined with logistic regression either outperforms or closely matches more complex learners—such as XGBoost or fully connected neural networks—while maintaining clearer and more intuitive decision rationales.

In contrast to clustering- or GMM-based approaches, SS does not rely on restrictive assumptions about data distributions, making it more adaptable to heterogeneous, real-world scenarios. Despite these promising outcomes, several avenues for future research remain. First, a more comprehensive sensitivity analysis of hyperparameters could further illuminate optimal strategies for diverse data environments. Second, the core algorithm itself can be modified or extended to handle more sophisticated forms of local structure, potentially boosting performance in especially high-dimensional or noisy domains. Finally, a detailed case study on real-world data applications would provide deeper insight into how SS can uncover and leverage meaningful substructures in practice. Taken together, these enhancements and validations would further establish SS as a valuable tool in domains where clarity and predictive power must go hand in hand.

\section{Data Availability}

All datasets used in this study are publicly available from open-source repositories. Specifically, datasets were obtained from Kaggle, the UCI Machine Learning Repository, or the MLBench library, as referenced in the citations.

\section{Compliance with Ethical Standards}

\begin{itemize}
\item  Disclosure of potential conflicts of interest -  Authors declare that they have no conflict of interest.
\item Research involving human participants and/or animals (Ethics Approval)  - This article does not contain any studies with human participants or animals performed by any of the authors.
\end{itemize}

\bibliography{Subsetting_algorithm}

\end{document}